\title{Adaptive Prior Selection in Gaussian Process Bandits with Thompson Sampling}
\author{%
    Jack Sandberg,
  Morteza Haghir Chehreghani \\
  Department of Computer Science and Engineering\\
  Chalmers University of Technology and University of Gothenburg\\
  Gothenburg, Sweden \\
  \texttt{\{jack.sandberg, morteza.chehreghani\}@chalmers.se}
}
\DeclarePairedDelimiterX{\norm}[1]{\lVert}{\rVert}{#1} 
\renewcommand{\paragraph}[1]{\noindent\textbf{#1}}
\newcommand{\alglinelabel}{%
  \addtocounter{ALC@line}{-1}
  \refstepcounter{ALC@line}
  \label
}
\crefname{ALC@line}{line}{lines}
\DeclareMathOperator*{\argmax}{arg\,max} 
\declaretheorem[sibling=theorem]{definition,corollary,proposition,lemma,conjecture,assumption,remark}
\providecommand{\customname}{Theorem}
\crefname{assumption}{Assumption}{Assumptions}
\newcommand{\C}{{\mathcal{C}}}
\newcommand{\E}{{\mathbb{E}}}
\newcommand{\Eb}{{\mathbb{E}}}
\newcommand{\f}{{\mathbf{f}}}
\newcommand{\GP}{{\mathcal{GP}}}
\renewcommand{\k}{{\mathbf{k}}}
\newcommand{\K}{{\mathbf{K}}}
\newcommand{\N}{{\mathcal{N}}}
\renewcommand{\O}{{\mathcal{O}}}
\newcommand{\Pb}{{\mathbb{P}}}
\newcommand{\R}{{\mathbb{R}}}
\newcommand{\V}{{\mathbb{V}}}
\newcommand{\Xc}{{\mathcal{X}}}
\newcommand{\y}{{\mathbf{y}}}
\newcommand{\beps}{{\bm{\epsilon}}}
\newcommand{\bmu}{{\bm{\mu}}}
\newcommand{\mumax}{\mu_{\text{max}}}
\newcommand{\ind}{{\perp\!\!\!\!\perp}}
\newcommand{\BR}{{\text{BR}}}
\newcommand{\1}{\mathds{1}}
\newcommand{\indfcn}[1]{\mathds{1}\{#1\}}
\newcommand{\sumt}{{\sum_{t \in [T]}}}
\newcommand{\sump}{{\sum_{p \in P}}}
\newcommand{\maxt}{\max_{t \in [T]}}
\let\emptyset\varnothing
\newcommand{\eqd}{\overset{d}{=}}
\begin{document}

\maketitle

\begin{abstract}
    Gaussian process (GP) bandits provide a powerful framework for performing blackbox optimization of unknown functions. The characteristics of the unknown function depend heavily on the assumed GP prior. Most work in the literature assume that this prior is known but in practice this seldom holds. Instead, practitioners often rely on maximum likelihood estimation to select the hyperparameters of the prior - which lacks theoretical guarantees. In this work, we study two algorithms for joint prior selection and regret minimization in GP bandits based on GP Thompson sampling (GP-TS): Prior-Elimination GP-TS (PE-GP-TS) that disqualifies priors with poor predictive performance, and HyperPrior GP-TS (HP-GP-TS) that utilizes a bi-level Thompson sampling scheme. We theoretically analyze the algorithms and establish a sublinear regret bound for HP-GP-TS. In addition, we demonstrate the effectiveness of these algorithms compared to the alternatives through extensive experiments with synthetic and real-world data.
\end{abstract}
\section{Introduction} \label{sec:introduction}
The Gaussian process bandit problem is 
a variant of the multi-armed bandit problem where the arms are correlated and their expected reward is sampled from a Gaussian process (GP). The flexibility of GPs have made GP bandits applicable in a wide range of areas that need to optimize blackbox functions with noisy estimates, 
including hyperparameter tuning \citep{turnerBayesian2021a}, 
online advertising \citep{nuaraCombinatorialBanditAlgorithmOnline2018}, and portfolio optimization \citep{gonzalvezFinancial2019}.
Most of the theoretical results in the literature assume that the GP prior is known but this is seldom the case in practical applications. Even with expert domain knowledge, selecting the exact prior to use can be a difficult task. Most practitioners tend to utilize maximum likelihood estimation (MLE) to identify suitable prior parameters. However, in a sequential decision making problem MLE is not guaranteed to recover the correct parameters which can hurt the performance.

As summarized in \cref{tab:comparison}, previous works by \citet{wangTheoretical2014,berkenkampNoRegret2019} propose algorithms that use a decreasing sequence of lengthscales according to a fixed schedule. A drawback of these schedules is that they cannot adapt to the data and may therefore explore excessively. The Lengthscale Balancing GP-UCB algorithm of \citet{ziomekBayesian2024} selects lengthscales such that each selected lengthscale incurs a similar amount of regret. However, this scheme relies on knowing the regret bounds, which can be impractical. \citet{ziomekTimevarying2025,luSurrogate2023} propose algorithms that support unknown priors of (finite) arbitrary type. Prior-Elimination GP-UCB (PE-GP-UCB) \citep{ziomekTimevarying2025} selects the prior and arm that maximize a joint upper confidence bound and eliminates priors with poor predictive performance. 
The joint upper confidence bound induces a double optimism in PE-GP-UCB that can lead to extra exploration.  
EGP-TS \citep{luSurrogate2023} uses bi-level Thompson sampling to select both a prior and an arm according to their posterior probabilities of being the true prior and optimal arm, respectively. Among these methods, posterior sampling is the only data-adaptive prior selection rule, and provides the closest analog to MLE.

EGP-TS is an instantiation of the more general MixTS algorithm \citep{hongThompson2022}, whose regret was analyzed in the standard bandit and linear setting. However, the theoretical analyses for both algorithms are flawed. The technical issues in the regret analysis of EGP-TS were recently demonstrated by \citet{sandbergComments2026} and, as we show in this work, the analysis of MixTS in the linear setting contains separate technical issues that invalidate the regret bound of \citet{hongThompson2022}.

\newcommand{\tablerow}[5]{#2 & #1 & #4 & #3 & #5 \\}
\begin{table}
    \centering
    \caption{Comparison of similar work in GP bandits with an unknown prior.}%
    \label{tab:comparison}%
\resizebox{\textwidth}{!}{
\begin{threeparttable}
    \begin{tabular}{l l l l l l c c} \toprule
         \tablerow{Algorithm}{Work}{MIG dependence}{Prior selection}{Supports unknown} \midrule
        \tablerow{BOHO (EI)}{\citep{wangTheoretical2014}}{$\hat{\gamma}_{T}^{3/2}$}{Schedule}{Lengthscale}
        \tablerow{A-GP-UCB}{\citep{berkenkampNoRegret2019}}{${\gamma}_{T, p_T}$\tnote{\dag}}{Schedule}{Lengthscale and RKHS norm}
        \tablerow{EGP-TS}{\citep{luSurrogate2023}}{$\sqrt{|P|\hat{\gamma}_{T}}$ (invalid)}{Posterior sampling}{Arbitrary mean and kernel} 
        \tablerow{LB-GP-UCB}{\citep{ziomekBayesian2024}}{$\gamma_{T,\bar{p}}$\tnote{\dag}}{Regret balancing}{Lengthscale and RKHS norm}
        \tablerow{PE-GP-UCB}{\citep{ziomekTimevarying2025}}{$\sqrt{|P| \hat{\gamma}_{T}}$}{Optimistic}{Arbitrary mean and kernel}
        \tablerow{{\bf PE-GP-TS}}{{\bf This work}}{$\sqrt{|P| \hat{\gamma}_T}$}{Optimistic}{Arbitrary mean and kernel}
        \tablerow{{\bf HP-GP-TS}\tnote{\ddag}}{{\bf This work}}{$\sqrt{\bar{\gamma}_T(P_1)}$}{Posterior sampling}{Arbitrary mean and kernel}
        \bottomrule
    \end{tabular}%
\begin{tablenotes}
    \item[\dag] $p_T$ is the final prior selected by A-GP-UCB, and $\bar{p}$ is the prior that minimizes the frequentist regret of GP-UCB. 
    \item[\ddag] Equivalent to EGP-TS \citep{luSurrogate2023}, we refer to it as HP-GP-TS.
\end{tablenotes}
\end{threeparttable}
}
\end{table}
Motivated by the excessive exploration of double optimism, alongside the flawed theoretical guarantees of existing Thompson sampling approaches, we investigate two distinct TS-based algorithms for GP-bandits with unknown priors. 
The first algorithm, Prior-Elimination GP-TS (PE-GP-TS), is an extension of PE-GP-UCB that replaces the doubly optimistic selection rule with posterior sampling and one less layer of optimism. We analyze the regret of PE-GP-TS and obtain a regret bound of order $\O(\sqrt{T |P|\hat{\gamma}_T \log T})$ (which matches that of PE-GP-UCB) plus a term (left unbounded) depending on the uncertainty of the optimal arm under the correct prior. Here, $T$ is the horizon, $|P|$ is the number of priors and $\hat{\gamma}_T$ is the worst-case maximum information gain.
The second algorithm we study is EGP-TS, which we refer to as HyperPrior GP-TS (HP-GP-TS) to emphasize its use of a hyperprior, and it removes both levels of optimism. 
Our analysis of HP-GP-TS addresses the issues in the previous work and yields a regret bound of order $\O(\sqrt{T \bar{\gamma}_T(P_1) \log T})$ where $\bar{\gamma}_T(P_1)$ is a sum of maximum information gains with cardinality equal to the horizon $T$ times the hyperprior probability $P_1(\cdot)$ s.t. $\bar{\gamma}_T(P_1) < |P| \hat{\gamma}_T$ generally holds.

We evaluate our methods on three sets of synthetic experiments and three experiments with real-world data. Across the experiments, the Thompson sampling based methods outperform PE-GP-UCB. Additionally, we find that the regret of HP-GP-TS does not increase with $|P|$ in our scaling experiments. Finally, we analyze the priors selected by the algorithms and observe that HP-GP-TS selects the correct prior more often than the other algorithms.  

The contributions of this work can be summarized as:

\begin{itemize}
    \item We propose a Thompson sampling based algorithm for GP-bandits with unknown prior, PE-GP-TS, and theoretically analyze its regret.
    
    \item We provide a sublinear regret bound for HP-GP-TS \citep[EGP-TS]{luSurrogate2023} that depends on $\bar{\gamma}_T(P_1)$, correcting and improving upon the bound of \citet{luSurrogate2023}. 
    \item We identify technical issues with the proof of the regret bound for MixTS \citep{hongThompson2022} in the linear setting, preventing its direct extension to the GP-setting.
    \item We experimentally evaluate the TS-based algorithms on both synthetic and real-world data, demonstrating that they achieve competitive performance and that the regret of HP-GP-TS does not empirically increase with $|P|$.
\end{itemize}

\section{Background and problem statement} \label{sec:background}
\paragraph{Problem statement}
We consider a sequential decision making problem where an agent repeatedly selects among a set of arms and receives a random reward whose mean depends on the selected arm and is unknown to the agent. The goal of the agent is to maximize the cumulative sum of rewards over a finite time horizon. We assume that the distribution of the means, the {\it prior}, is sampled from a set of priors, the {\it hyperprior}. An effective agent must distinguish which prior the means are sampled from to ensure it explores efficiently. 

Now, let us formally state the problem. Let $\Xc \subseteq [0,r]^d$ denote the finite set of arms and $P$ a finite set of priors with associated prior mean and kernel functions $\mu_{1, p}: \Xc \mapsto \R$ and $k_{1, p}: \Xc \times \Xc \mapsto [-1, 1]$, $\forall p \in P$. Let $p^* \in P$ denote the true prior and assume the expected reward function $f: \Xc \mapsto \R \sim \GP(\mu_{1,p^*}, k_{1,p^*})$ is a sample from a Gaussian process with prior $p^*$. Both the function $f$ and the true prior $p^*$ are considered unknown. We will consider two settings: In the frequentist selection setting, the prior $p^* \in P$ is picked arbitrarily. In the Bayesian selection setting, the prior is sampled from a known hyperprior $p^* \sim P_1$.
 Then, for time step $t = 1, 2, \ldots, T$ where $T$ is the horizon, the agent selects an arm $x_t \in \Xc$ and observes the reward $y_t = f(x_t) + \epsilon_t$ where $\{\epsilon_t\}_{t=1}^T$ are i.i.d. zero-mean Gaussian noise with variance $\sigma^2$. The goal of the agent is to select a sequence of arms $\{ x_t \}_{t=1}^T$ that minimizes the regret $R(T) = \sum_{t \in [T]} f(x^*) - f(x_t)$ where $[T] = \{1,\ldots, T\}$ and $x^* = \argmax_{x \in \Xc} f(x)$. In the Bayesian selection setting, we evaluate the agent based on the Bayesian regret $\BR(T) = \E \left[ R(T) \right]$ where the expectation is taken over the prior $p^*$, the expected reward function $f$, the noise $\{\epsilon_t\}_{t=1}^T$ and the (potentially) stochastic selection of arms.

\paragraph{Gaussian processes} A Gaussian process $f(x) \sim \GP(\mu, k)$ is a collection of random variables such that for any subset $\{ x_1, \ldots, x_n\} \subset \Xc$, the vector $[f(x_1), \ldots, f(x_n)] \in \R^n$ has a multivariate Gaussian distribution. The probabilistic nature of GPs make them very useful for defining and solving bandit problems where the arms are correlated.
Given the history $H_t = \{(x_i, y_i)\}_{i=1}^{t-1}$,
the posterior mean and kernel functions of a Gaussian process $\GP(\mu, k)$ are given by 
    $\mu_t(x) = \mu(x) + \k^\top \left( \K + \sigma^2 I \right)^{-1} (\y - \bmu)$, and $
    k_t(x, \tilde{x}) = k(x, \tilde x) - \k^\top \left( \K + \sigma^2 I \right)^{-1} \tilde{\k}$.
Above, $\k, \tilde \k \in \R^{t-1}$ are vectors such that $(\k)_i = k(x_i, x)$ and $(\tilde{\k})_i = k(x_i, \tilde x)$. Additionally, $\y, \bmu \in \R^{t-1}$ are also vectors such that $(\y)_i = y_i$ and $(\bmu)_i = \mu(x_i)$. The gram matrix is denoted by $\K \in \R^{t-1 \times t-1}$ where $(\K)_{i,j} = k(x_i, x_j)$. Let $\mu_{t, p}$ and $k_{t, p}$ denote the posterior mean and kernel for a Gaussian process with prior $p \in P$ at time $t$ and let $\sigma^2_{t,p}(x) = k_{t, p}(x, x)$ denote the posterior variance at time $t$. The kernel $k$ determines important characteristics of the functions $f$, see \cref{app:kernels} for more details and examples.

\begin{figure}
    \centering
    \begin{subfigure}[b]{0.35\linewidth}
        \includegraphics[width=\linewidth]{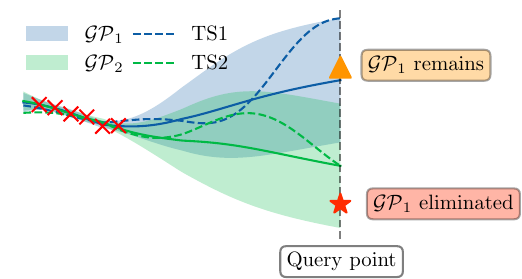}
        \caption{}
        \label{fig:priorelims}
    \end{subfigure}%
    \begin{subfigure}[b]{0.5\linewidth}
        \includegraphics[width=\linewidth]{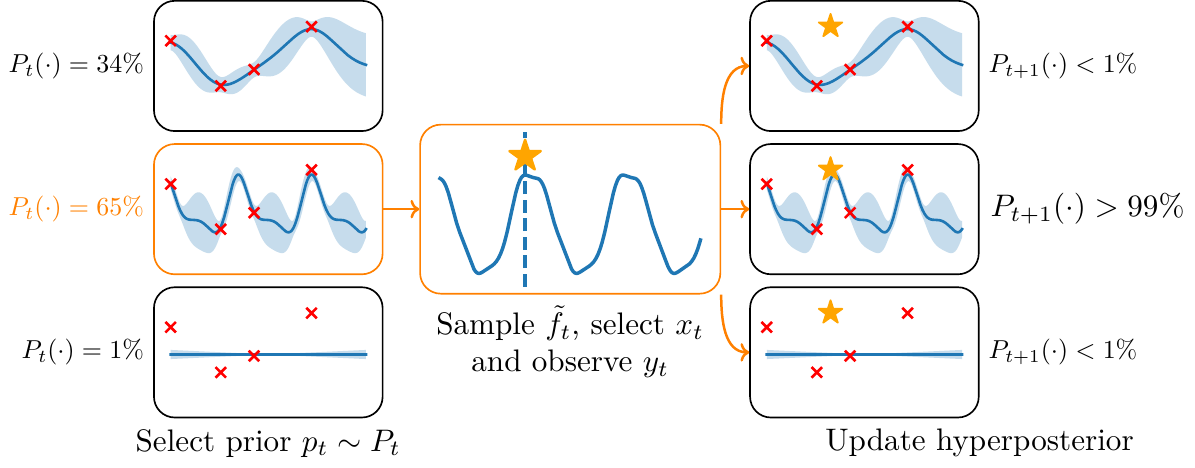}
        \caption{}
        \label{fig:hpgpts}
    \end{subfigure}
    \caption{a) Elimination procedure of PE-GP-TS. The solid lines correspond to posterior means and the shaded regions are confidence intervals. The figure has been adapted from \citet{ziomekTimevarying2025}. The dashed lines are samples from the posteriors. 
    b) Overview of HP-GP-TS. 
    }
\end{figure}

\paragraph{Information gain} The maximal information gain (MIG) is a measure of the reduction in uncertainty of $f$ after observing the most informative data points up to a specified size. The MIG commonly occurs in regret bounds for GP bandit algorithms \citep{srinivasInformationTheoretic2012,vakiliInformationGainRegret2021} and its growth rate is strongly determined by the prior kernel of the GP. Hence, we will define the MIG for any fixed GP prior $p \in P$. Let $\y_A$ denote noisy observations of $f$ at the locations $A \subset \Xc$. Then, the MIG given prior $p \in P$ is defined as 
    $\gamma_{T, p} := \sup_{A \subset \Xc, |A| \leq T} I_p(\y_A; f),$
where $I_p(\y_A; f) = H(\y_A | p) - H(\y_A | f, p)$ is the mutual information between $\y_A$ and $f$ given $p$, and $H(\cdot)$ denotes the entropy. To aid our analysis later, we also define the worst-case MIG as $\hat{\gamma}_T := \max_{p \in P} \gamma_{T,p}$ and the hyperprior-weighted MIG as $\bar{\gamma}_T(P_1) := \sump \Gamma_p(T P_1(p))$ for concave $\Gamma_p(\cdot)$ s.t. $\Gamma_p(t) \geq \gamma_{t,p}$ for all $t,p \in [T] \times P$. For the RBF and Matérn kernels, $\gamma_{T,p} = \O ( \log^{d+1}(T) )$ and $\gamma_{T,p} = \O ( T^{\frac{d}{2\nu + d}} \log^{\frac{2\nu}{2\nu + d}}(T) )$ \citep{srinivasInformationTheoretic2012,vakiliInformationGainRegret2021}.

\section{Algorithms} \label{sec:algorithms}
As discussed by \citet{russoLearningOptimizePosterior2014}, TS can offer advantages over UCB algorithms for problems where constructing tight confidence bounds is difficult. In addition, Thompson sampling is often observed to perform better than UCB in practice \citep{chapelleEmpiricalEvaluationThompson2011,wenEfficientLearningLargeScale2015,kandasamyParallelisedBayesianOptimisation2018,akerblomOnlineLearningNetwork2023,akerblomOnlineLearningEnergy2023}. Motivated by this, we present two algorithms for adaptive prior selection based on TS.

\begin{wrapfigure}[21]{r}{0.5\linewidth}
\begin{minipage}[t]{\linewidth}
\vspace*{-2.5em}
\begin{algorithm}[H] 
    \caption{Prior Elimination GP-TS (PE-GP-TS)} \label{alg:PEGPTS}
    \begin{algorithmic}[1]
        \INPUT Horizon $T$, prior functions $\{\mu_{1, p}, k_{1, p}\}_{p \in P}$, confidence parameters $\{\beta_t\}_{t=1}^T$ and $\{\xi_t\}_{t=1}^T$.
        \STATE $P_1 = P$, $S_{0, p} = \emptyset$ $\forall p \in P$
            \FOR{$t = 1, 2\ldots, T$}
                \STATE Sample $\tilde{f}_{t, p} \sim \GP(\mu_{t,p}, k_{t,p})$ $\forall p \in P_{t}$
                \STATE Set $x_t, p_t = \argmax_{x, p \in \Xc \times P_{t}} \tilde{f}_{t, p}(x)$
                \STATE $S_{t, p_t} = S_{t-1,p_t} \cup \{ t \}$ and $S_{t, p} = S_{t-1, p}$ for $p \in P \setminus \{p_t\}$
                \STATE Observe $y_t = f(x_t) + \epsilon_t$
                \STATE Set $\eta_t = y_t - \mu_{t, p_t}(x_t)$ \alglinelabel{line:predictiveerror}
                \STATE Set $V_t = \sqrt{\xi_t | S_{t, p_t} |} + \sum_{i \in S_{t, p_t}} \sqrt{\beta_i} \sigma_{i, p_t}(x_i)$
                \IF{$ \left| \sum_{i \in S_{t,p_t}} \eta_i \right| > V_t$ and $|P_t| > 1$} \alglinelabel{line:elimincriteria}
                    \STATE $P_{t+1} = P_{t} \setminus \{p_t\}$
                \ELSE
                    \STATE $P_{t+1} = P_{t}$
                \ENDIF
            \ENDFOR
    \end{algorithmic}
\end{algorithm}
\end{minipage} 
\end{wrapfigure}

\subsection{Prior-Elimination with Thompson sampling} \label{sec:pegpts}

Our first algorithm is an extension of PE-GP-UCB \citep{ziomekTimevarying2025} to be employed with Thompson sampling -- instead of UCB. 
The key difference is that instead of maximizing the upper confidence bound $U_{t}(x, p) = \mu_{t,p}(x) + \sqrt{\beta_t} \sigma_{t,p}(x)$ over $\Xc \times P_{t}$, we instead sample $\tilde{f}_{t, p}$ from the posterior $\GP(\mu_{t,p}, k_{t,p})$ for all priors $p \in P_{t}$ where $P_{t}$ is the set of active priors. Then, we select the arm and prior $x_t, p_t$ such that $x_t, p_t = \argmax_{x, p \in \Xc \times P_{t}} \tilde{f}_{t, p}(x)$. Whilst PE-GP-UCB has two layers of optimism, the upper confidence bound and joint maximization of $x$ and $p$, PE-GP-TS has only a single layer of optimism - which should alleviate potential overexploration issues.

The elimination procedure of PE-GP-TS is illustrated in \cref{fig:priorelims}. Samples $\tilde{f}_{t, p}$ are drawn from the active prior $p \in P_{t}$. Then, the unknown function $f$ is queried at the selected arm $x_t$. If the observed value differs too much from the prediction made by the selected prior, then the selected prior is eliminated. Otherwise, it remains active.

The PE-GP-TS algorithm is presented in \cref{alg:PEGPTS}. Similar to PE-GP-UCB, the set $S_{t, p}$ is used to store the time steps where prior $p$ was selected up to and including time $t$. When prior $p_t$ is selected, the prediction error $\eta_t = y_t - \mu_{t, p_t}(x_t)$ between the observed and predicted value made by the prior $p_t$ is computed. If the sum of prediction errors made by the prior $p_t$ exceeds the threshold value $V_t$, then $p_t$ is eliminated from the active priors $P_t$, see \cref{line:elimincriteria}. Note that at time step $t$, only the selected prior $p_t$ can be eliminated. As such, if a prior is very pessimistic it may never be selected and therefore will never be eliminated. Thus, the final set of active priors $P_T$ should be viewed as non-eliminated priors rather than necessarily being reasonable priors.

{\parfillskip0pt\par}
\begin{wrapfigure}[10]{r}{0.5\linewidth}
\begin{minipage}{\linewidth}
\vspace*{-4.5em}
\begin{algorithm}[H]
    \caption{HyperPrior GP-TS (HP-GP-TS)} \label{alg:HPGPTS}
    \begin{algorithmic}[1]
        \INPUT Horizon $T$, prior functions $\{\mu_{1, p}, k_{1, p}\}_{p \in P}$, hyperprior $P_1$.
            \FOR{$t = 1, 2\ldots, T$}
                \STATE Sample $p_t \sim P_{t}$
                \STATE Sample $\tilde{f}_{t} \sim \GP(\mu_{t, p_t}, k_{t, p_t})$
                \STATE Set $x_t = \argmax_{x \in \Xc} \tilde{f}_t(x)$
                \STATE Observe $y_t = f(x_t) + \epsilon_t$
                \STATE Set ${P}_{t+1}(p) \propto \mathbb{P}(y_t | x_t, \{ x_i, y_i \}_{i = 1}^{t-1}, p) \cdot P_{t}(p)$ 
                 \COMMENT{Update hyperposterior}
            \ENDFOR
    \end{algorithmic}
\end{algorithm}
\end{minipage}
\end{wrapfigure}

\subsection{HyperPrior Thompson sampling} \label{sec:hpgpts}
In our first algorithm, we removed one layer of optimism. The second algorithm we study is a fully Bayesian algorithm that uses a hyperposterior sampling scheme where both the prior and the mean function are sampled from their respective posteriors. By shedding the optimism over the selected prior $p_t$, HP-GP-TS should be able to avoid costly exploration by selecting likely priors instead of optimistic ones.

The algorithm is visualized in \cref{fig:hpgpts} and 
presented in detail in \cref{alg:HPGPTS}. In the first step, the current prior $p_t$ is sampled from the hyperposterior $P_{t}$. Then, a single sample $\tilde f_t$ is taken from the selected posterior $\GP(\mu_{t, p_t}, k_{t, p_t})$ and is used to select the current arm: $x_t = \argmax_{x \in \Xc} \tilde{f}_t(x)$. After observing $y_t$, the hyperposterior is updated by computing the likelihood of $y_t$ under the different priors. Note that since the set of priors $P$ is finite, computing the posterior is tractable albeit computationally costly for large $t$ with a complexity of $\O(t^3|P|)$. The algorithm can be extended to continuous priors $P$ using MCMC sampling. In comparison to SCoreBO \citep{hvarfner_self-correcting_2023} and other fully Bayesian algorithms that compute expected values over the hyperposterior through sampling, HP-GP-TS requires only one sample from the posterior and hyperposterior -- potentially reducing the computational cost significantly. The likelihood $\mathbb{P}(y_t | x_t, \{ x_i, y_i \}_{i = 1}^{t-1}, p) = \N(y_t; \mu_{t, p}(x_t), \sigma_{t,p}^2(x_t) + \sigma^2)$ is simply the Gaussian likelihood of the posterior at $x_t$ with added Gaussian noise with variance $\sigma^2$.

\section{Regret analysis} \label{sec:regretanalysis}
In this section, we analyze the regret for the proposed algorithms. Recall from the problem statement that we consider two slightly different settings for the two algorithms. Specifically, for PE-GP-TS we assume the unknown prior $p^*$ is selected arbitrarily from $P$ whilst for HP-GP-TS we assume that the unknown prior $p^*$ is selected from a known hyperprior distribution $P_1$.

\subsection{Analysis of PE-GP-TS}
\citet{ziomekTimevarying2025} structured the proof of the regret bound for PE-GP-UCB into 4 larger steps; First, showing that $p^*$ is never eliminated with high probability. Second, establishing a bound on the instantaneous regret. Third, bounding the cumulative regret. Finally, the cumulative bound is re-expressed in terms of the worst-case MIG. For PE-GP-TS, we establish a new bound on the instantaneous regret and then adapt the steps of \citeauthor{ziomekTimevarying2025} to accommodate the new bound.
To bound the instantaneous regret in the lemma below, we require concentration inequalities to hold for the posteriors, the posterior samples and the noise (see \cref{lem:concentrationintervals,lem:noiseconc}).
\begin{restatable}{lemma}{lemSimpleRegret}
If the events of \cref{lem:concentrationintervals,lem:noiseconc} holds, then the following holds for the instantaneous regret of PE-GP-TS for all $t \in [T]$:
$
    f(x^*) - f(x_t) \leq 2 \sqrt{\beta_t} \sigma_{t, p^*}(x^*) + \sqrt{\beta_t} \sigma_{t, p_t}(x_t) - \eta_t + \epsilon_t. \label{eq:simpleregret}
$
\end{restatable}

Compared to the instantaneous regret bound for PE-GP-UCB, we obtain the additional term $2 \sqrt{\beta_t} \sigma_{t, p^*}(x^*)$ which leads to the following regret bound:

\begin{restatable}{theorem}{thmPEGPTSregretbound} \label{thm:pegpts}
    Let $B_{p^*} = \beta_1 + \sup_{x \in \Xc} |\mu_{1, p^*}(x)|$ and $C = 2 /\log(1 + \sigma^{-2})$. If $p^* \in P$ and $f \sim \GP(\mu_{1,p^*}, k_{1,p^*})$, then PE-GP-TS with confidence parameters $\beta_t = 2 \log (2 |\Xc| |P| \pi^2 t^2 / 3 \delta)$ and $\xi_t = 2 \sigma^2 \log (|P| \pi^2 t^2 / 3\delta)$, satisfies the following regret bound with probability at least $1 - \delta$:
    \begin{equation}
        R(T) \leq 2 |P| B_{p^*} + 2 \sqrt{\xi_T |P| T} + 2 \sqrt{C T \beta_T \hat{\gamma}_{T} |P|}
        + 2 \sqrt{T \beta_T \textstyle \sumt \sigma^2_{t,p^*}(x^*)}
    \end{equation}
\end{restatable}

The bound of the first three terms is of order $\O( \sqrt{T \beta_T \hat{\gamma}_T})$ w.r.t. $T$ which matches that of PE-GP-UCB. To our knowledge, the best lower bound for standard GP bandits in the Bayesian setting, where $f$ is sampled from a GP, is $\Omega(\sqrt{T})$ for $d = 1$ \citep{scarlettTight2018}. This would suggest that our bound is tight up to a factor $\O(\sqrt{\beta_T \hat{\gamma}_T})$ when considering only the first three terms. However, note that the sublinearity of $\sumt \sigma_{t,p^*}^2(x^*)$ is not demonstrated.

\subsection{Analysis of HP-GP-TS}
We analyze the regret of HP-GP-TS by decomposing it into three terms and using the prior confidence technique.
The initial regret decomposition is similar to \citet{luSurrogate2023} and the prior confidence technique is first employed by \citet{hongThompson2022} in standard and linear settings. However, as we discuss in \cref{sec:comparison}, both of these works have fundamental issues making their theoretical analyses invalid.

First, note that HP-GP-TS inherits the probability matching property of GP-TS that $x_t | H_t \eqd x^* | H_t$ where $\eqd$ denotes equal in distribution. In addition, $p_t | H_t \eqd p^* | H_t$ since $p_t$ is sampled from the posterior distribution of $p^*$. Using this, one can derive the following decomposition of the regret:
\begin{equation}
    \BR(T) = \sumt \E[
    \underbrace{f(x^*) - U_{t,p^*}(x^*)}_{(1)}
    +
    \underbrace{(\sqrt{\beta_t} + \sqrt{\eta_T}) \sigma_{t,p_t}(x_t)}_{(2)}
    +
    \underbrace{L_{t,p_t}(x_t) - f(x_t)}_{(3)}
]
\end{equation}
where the upper confidence bound $U_{t,p}(x) = \mu_{t,p}(x) + \sqrt{\beta_t} \sigma_{t,p}(x)$ and the lower confidence bound $L_{t,p}(x) = \mu_{t,p}(x) - \sqrt{\eta_T} \sigma_{t,p}(x)$. Term (1) can be bounded using the same steps as for standard GP-TS since the confidence bound $U_{t,p^*}(x^*)$ uses the true prior $p^*$. The key question for term (2) is whether a tight bound for $\sumt \sigma_{t,p_t}^2(x_t)$ can be obtained. \citet{ziomekTimevarying2025} provides the bound $\sum_{p \in P} \gamma_{N_{T}(p),p}$ as an intermediate step in the proof of Lemma 5.3 where $N_T(p)$ is the number of times prior $p$ is selected in total. Due to the nature of PE-GP-UCB (and similarly for PE-GP-TS) the only guarantee on $N_T(p)$ is that it is smaller than $T$, thus the bound $\sum_{p \in P} \gamma_{T, p}$ is used. However, we show in \cref{lem:infgain_selected_leq_avg_infgain} that a tighter bound can be obtained for HP-GP-TS, thereby improving the dependency upon the MIG compared to the bound of \citet{luSurrogate2023}. Under a Bayesian model, we show that $\E[N_T(p)] = P_1(p) T$ for HP-GP-TS and by a concavity argument we provide a bound in terms of $\bar{\gamma}_T (P_1) := \sump \Gamma_p(P_1(p)T)$ where $\Gamma_p(\cdot)$ is a continuous upper bound of $\gamma_{\cdot, p}$.

\begin{restatable}{lemma}{lemInfgainSelectedLeqAvgInfgain} \label{lem:infgain_selected_leq_avg_infgain}
    Let $C = 2 / \log(1 + \sigma^{-2})$, $N_T(p) = \sumt \indfcn{p_t = p}$ where $\1$ is the indicator function, and 
    $\Gamma_p: \R_{\geq 0} \mapsto \R_{\geq 0}$ be a concave function such that $ \Gamma_p(t) \geq \gamma_{t,p}$ for all $t, p \in [T] \times P$. Furthermore, let $\bar{\gamma}_T(P_1) := \sump \Gamma_p(P_1(p) T )$, then for HP-GP-TS, $\E[N_T(p)] = P_1(p)T$ and 
    \begin{equation}
        \E \Big[ \sumt \sigma_{t,p_t}^2(x_t) \Big] \leq C \bar{\gamma}_T(P_1).
    \end{equation}
\end{restatable}

To bound term (3), we define the excess reward function as 
$G_t(p) = \sum_{s=1}^{t-1} \indfcn{p_s=p}
\left(
        \mu_{s,p_s}(x_s) - \sqrt{\eta} \sigma_{s,p_s} - f(x_s) - \epsilon_s 
\right)$ for $\eta > 0$, similar to \citet{luSurrogate2023}. Then, we define the confidence set at time $t$ as 
$\C_t = \left\{ 
    p \in P: G_s(p) \leq \xi_s(p) \,\, \forall s \leq t  
\right\}$ where $\xi_t(p) = \sigma \sqrt{12 N_{t-1}(p) \log(T)}$ where $N_{t}(p) = \sum_{s=1}^{t} \indfcn{p_s = p}$ denotes how often the prior $p$ was selected up to and including time $t$. Unlike \citet{hongThompson2022,luSurrogate2023}, we impose a time-uniform requirement, i.e. a prior $p \in \C_t$ only if $p \in \C_s$ for all $s < t$, as we found their proofs uncompelling without this requirement, see \cref{rem:time-uniform-confset}. 
We show that $p^* \in \C_t$ with high probability in \cref{lem:true_prior_in_conf_set} and split term (3) into two new terms:
\begin{align} \label{eq:term3_decomposition}
    (3) = 
    \sumt \E \left[ (L_{t,p_t}(x_t) - f(x_t)) \indfcn{p_t \notin \C_t} \right ]
    + 
    \sumt \E \left[ (L_{t,p_t}(x_t) - f(x_t)) \indfcn{p_t \in \C_t} \right ].
\end{align}
Since $\C_t$ is defined to only consider the excess reward in the past, the right term can only be bounded up to the stopping time $\tau_p$ for each prior $p \in P$. The main hurdle is therefore to bound the expectation of the stopped excess reward for each prior $\E[ L_{\tau_p, p}(x_{\tau_p}) - f(x_{\tau_p}) - \epsilon_{\tau_p}]$. \citet{hongThompson2022,luSurrogate2023} provide incorrect bounds for this term in the linear and GP setting respectively, see \cref{sec:comparison}. We first note that the stopped value of this sequence can be bounded by the maximum over the same sequence and then provide a bound for $\E[\maxt (L_{t,p}(x_t) - f(x_t) - \epsilon_t)]$. For the left term, we know that $\E[\indfcn{p_t \notin \C_t}] = \Pb (p^* \notin \C_t) = \O(T^{-5})$ but the factor $L_{t,p_t}(x_t) - f(x_t)$ prevents direct application of this result. Again, the bounds provided by \citet{hongThompson2022,luSurrogate2023} do not hold. We make the observation that the two factors can be separated by the Cauchy-Schwarz inequality for expected values $(\E[XY] \leq \sqrt{\E[X^2]\E[Y^2]})$ and provide bounds for $\E[\mu^2_{t,p_t}(x_t)]$ and $\E[f(x_t)^2]$ in \cref{lem:expected_mean_bound,lem:bound_sup_f_squared}. Finally, we are ready to state our regret bound for HP-GP-TS.

\begin{restatable}{theorem}{thmHPGPTS} \label{thm:HPGPTS}
    Let $C = 2 / \log(1 + \sigma^{-2})$, $\mumax = \sup_{p,x \in P \times \Xc} |\mu_{1,p}(x)|$, $M = \E[ \sup_{x \in \Xc} |f(x)|]$, $M_p = \E\left[ \sup_{x \in \Xc} |f(x)| \big| p^* = p \right]$, $M_\Delta = \max_{p \in P} M_p - \min_{p \in P} M_p$, and $\bar{M} = M^2 + 1 + M_\Delta^2/4$.
    If $p^* \sim P_1$, $f \sim \GP(\mu_{1,p^*}, k_{1,p^*})$, $\beta_t = 2 \log (|\Xc| t^2 / \sqrt{2\pi})$, $\eta_T = 2 \log |\Xc| T^6$, then the Bayesian regret of HP-GP-TS is bounded by
    \begin{equation}
    \begin{aligned}[b]
        \BR(T) &\leq 
        \frac{\pi^2}{6} 
        + \sqrt{CT \bar{\gamma}_T(P_1)} (\sqrt{\beta_T} + \sqrt{\eta_T}) 
        \hspace{11.25em}(\text{Terms (1) and (2)})
        \\
        &\begin{rcases}
        + \frac{\sqrt{3}}{T} \Big( \sqrt{\sigma^{-2}( \bar{M} + 2M \mumax + \mumax^2 + \sigma^2)} + \sqrt{\bar{M}/T} \Big) 
        \\
         + \sigma \sqrt{14 T|P| \log T} + |P| \big( \sigma^{-1} \sqrt{T}(M + \mumax + \sigma) 
    + M 
    + \sigma \sqrt{2 \log T} \big)
    \end{rcases}
        \hspace{0em} (\text{Term (3)})
    \label{eq:hpgpts_regret_bound}
    \end{aligned}
    \end{equation}
\end{restatable}

Note that $M_p$ is the expected supremum of $|f(x)|$ given $p^*=p$ whereas $M$ is the expected supremum of $|f(x)|$ for the mixture $p^* \sim P_1$. Furthermore, $M_\Delta$ denotes the spread in expected supremums and $\bar{M}$ bounds the expectaction of the squared supremum $\sup_x f(x)^2$, see \cref{lem:bound_sup_f_squared}. 
Unlike PE-GP-TS, -UCB, and \citet{luSurrogate2023}, our regret bound of HP-GP-TS depends on the hyperprior-weighted MIG $\bar{\gamma}_T(P_1)$ rather than the worst case $|P| \hat{\gamma}_T$ which can impact the theoretical regret significantly if the complexity of the priors differ and the hyperprior is weighted towards simple priors. This is reasonable since the elimination methods assume arbitrary selection of $p^*$ as opposed to sampling from a hyperprior. The final term in \cref{eq:hpgpts_regret_bound} is $\O(|P| \sqrt{T}$) whereas the term for PE-GP-TS and -UCB that is linear in $|P|$ is constant w.r.t. $T$. 
In \cref{sec:experiments}, we empirically evaluate the dependency on $|P|$.

\subsection{Comparison to MixTS and EGP-TS} \label{sec:comparison}
\citet{hongThompson2022} study MixTS, a Thompson sampling algorithm that assumes the prior is a mixture distribution, for standard and linear bandits. 
For the linear setting with unbounded rewards, the proof requires conditioning on the linear parameter vector $\theta^*$ to lie close to its prior mean. Under this additional event $E_0$, the distribution of the true parameter vector $\theta^*$, the true prior $p^*$ and the optimal arm $x^*$ can shift. But, conditioned on the history $H_t$, MixTS is unaffected by conditioning on $E_0$ at time step $t$. Consequently, the sampled parameter vector $\theta_t$, the selected prior $p_t$ and the selected arm $x_t$ maintain the same distribution. However, \citet{hongThompson2022} use that $x^*, p^* | H_t, E_0 \eqd x_t,p_t | H_t, E_0$ without proof, invalidating Theorem 1 of \citet{hongThompson2022}. The event $E_0$ bounds the maximum per-round regret by a constant, enabling $L_{t,p}(x_{t}) - f(x_{t})$ to be conveniently bound by a constant for both terms in \cref{eq:term3_decomposition}. Unfortunately, the intermediate steps contain other issues that we discuss further in \cref{app:hongissues}.    
\citet{luSurrogate2023} study EGP-TS for sequential and parallel GP-bandit problems. Lemma 5 of \citet{luSurrogate2023} bounds $\E[\mu_{t,p_t}(x_t) - f(x_t)]$ by a constant $2B$ with an incorrect proof. Even assuming a correct proof, the lemma is applied incorrectly to claim that $\E[(\mu_{t,p_t}(x_t) - f(x_t)) \indfcn{p_t \notin \C_t}] \leq \E[2B \indfcn{p_t \notin \C_t}]$ and $\E[L_{\tau,p}(x_{\tau}) - f(x_{\tau}) - \epsilon_\tau] \leq 2B$.
Our proof avoids the issues in previous work by separating the event $\indfcn{p_t \notin \C_t}$ from the excess reward using the Cauchy-Schwarz inequality for expectations, bounding the stopped excess reward by the maximum excess reward, and bounding the expected values of $\mu_{t,p_t}(x_t)$, $\mu_{t,p}^2(x_t)$ and $f(x_t)^2$.

\begin{figure*}
    \centering
    \includegraphics[width=\linewidth,trim = {0 0.5em 0 0.3em}, clip]{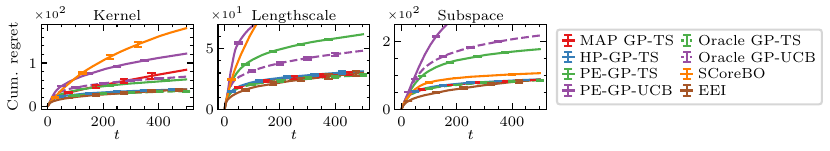}
    \caption{Cumulative regret for synthetic experiments with varying kernel, lengthscale and active subspace. The final regret for PE-GP-UCB is 114 and 389 in the lengthscale and subspace experiments, and 181 for SCoreBO in the lengthscale experiment. Errorbars correspond to $\pm1$ standard error.}
    \label{fig:regretsynthetic}
\end{figure*}

\section{Experiments} \label{sec:experiments}
In this section, we describe our experiments based on synthetic and real-world data.

\paragraph{Synthetic experiments}
We consider three synthetic setups with different choices of priors in $P$. For the first setup, the priors have one of the following kernels: i) RBF kernel, ii) the rational quadratic kernel with $\alpha = 0.5$, iii) Matérn kernel with $\nu=5/2$, iv) Matérn kernel with $\nu=3/2$, v) periodic kernel with period $\rho = 5$, vi) linear kernel with $v = 0.05^2$. 
For the second setup, 8 priors use the RBF kernel with different lengthscales equidistantly spaced between $1/2$ and $4$.
For the third setup, the total dimensions $d = 16$ but each of the 5 priors $p_i$ assumes $f(x)$ depends on $d_s = 4$ subdimensions. The 4 subdimensions are designed such that the priors are equally difficult to distinguish. 
All priors use the RBF kernel with lengthscale $\ell = 8$. For all three setups, the true prior $p^*$ is sampled uniformly from $P$, the noise variance $\sigma^2 = 0.25^2$, and the horizon $T = 500$. For the first two setups, 500 arms are equidistantly spaced in $[0, 20]$ and for the third 500 arms are sampled uniformly on $[0,20]^{16}$. 
All models are evaluated on 500 seeds on each setup. As baselines, we use PE-GP-UCB, SCoreBO \citep{hvarfner_self-correcting_2023}, fully Bayesian Expected Improvement (EEI) \citep{benassi_robust_2011} and Maximum A Posteriori (MAP) GP-TS. MAP GP-TS is identical to HP-GP-TS except for greedily selecting $p_t$ from the posterior: $p_t = \argmax_p P_{t}(p)$.\footnote{Note that since the hyperprior is uniform, MAP is equivalent to discrete maximum likelihood estimation.} 
In addition, we compare against the oracle variants of PE-GP-TS and PE-GP-UCB that are only given the true prior: $P_1 = \{ p^* \}$.

The cumulative regret for the three synthetic experiments is shown in \cref{fig:regretsynthetic} and the final regret is shown in \cref{tab:jointregret} in \cref{app:expresults}. Across all three experiments, we observe that HP-GP-TS and EEI has lower regret than the other methods and performs close to the oracle GP-TS. For the kernel and subspace experiments, PE-GP-TS has lower regret than the oracle GP-UCB. Hence, even if PE-GP-UCB was optimized to perform as well as the oracle, it would still not achieve the regret of the TS methods. MAP GP-TS has slightly higher regret than HP-GP-TS for the lengthscale and subspace experiments but has significantly higher regret and variance for the kernel experiment. The greedy selection of MAP (MLE) leads to under-exploration for MAP GP-TS in certain instances. SCoreBO has the highest regret in the kernel and lengthscale experiment but has more comparable performance in the subspace experiment. In \cref{fig:longregret} in \cref{app:expresults}, we report the regret for the two most competitive methods, HP-GP-TS and EEI, with an extended horizon $T = 1500$ where we observe that HP-GP-TS yields noticeably lower regret.

The maximum information gain, the number of priors remaining $|P_t|$ and the hyperposterior entropy for the kernel experiment is shown in \cref{fig:eliminationAndEntropy}. We note that $\bar{\gamma}_T(P_1)$ is significantly smaller than $|P|\hat{\gamma}_T$. The PE-methods eliminate at most one prior on average. 
In contrast, the final hyperposterior entropy across all algorithms is equivalent to 70-99\% of the probability mass being assigned to one prior showing that the hyperposterior adapts more effectively. Across the experiments, SCoreBO has the lowest hyperposterior entropy followed by HP-GP-TS and EEI has the highest (except for the kernel experiment), see \cref{fig:elimination,fig:entropy} in \cref{app:expresults}. Thus, HP-GP-TS has similar regret to EEI but lower hyperposterior entropy.

In \cref{fig:kernelconfusion}, we visualize how often the methods select the true prior $p^*$ (or kernel) in the kernel experiment as confusion matrices. PE-GP-UCB selects the Matérn-3/2 kernel more than 96\% of the rounds. The Matérn-3/2 kernel induces a distribution over functions that are less smooth compared to the other kernels and produces much wider confidence intervals outside the observed data leading to excessive optimistic exploration. PE-GP-TS also shows a bias towards the Matérn-3/2 kernel but does not select it as frequently as PE-GP-UCB -- demonstrating that one layer of optimism has been removed. The overall ``accuracy" of the selected priors, i.e. $\sumt \1\{ p_t = p^* \} / T$, for the elimination-based methods is around $17\%$ in the kernel experiment compared to $62.9\%$ and $63.2\%$ for MAP and HP-GP-TS respectively. For HP-GP-TS, we observe that it can easily identify the periodic and linear kernels. However, the RBF, Matérn and RQ kernels are often confused with each other. These kernels do not have as easily distinguishable characteristics and are likely to produce similar posteriors even with a small amount of data. See \cref{fig:confusionmatrices} in \cref{app:expresults} for confusion matrices in the lengthscale and subspace experiments.

\begin{figure}
    \centering
    \includegraphics[width=\linewidth,trim = {0 0.45em 0 0.35em}, clip]{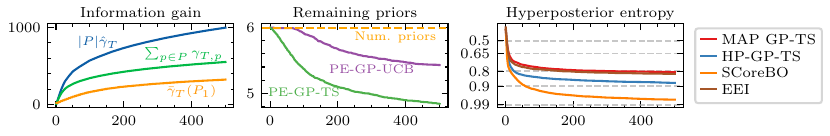}
    \caption{Analysis of the kernel experiment. Greedily maximal information gain (left). Mean number of priors remaining in $P_t$ over time for PE-GP-UCB and -TS (middle). Entropy in the hyperposterior $P_t$ over time for HP- and MAP GP-TS (right). The dashed reference lines correspond to entropies of discrete distributions with prob. $q$ on one choice and prob. $\frac{1-q}{|P|-1}$ on the other $|P|-1$ choices.}
    \label{fig:eliminationAndEntropy}
\end{figure}
\begin{figure}
    \centering
    \includegraphics[width=\linewidth, trim = {0 0.35em 0 0.35em}, clip]{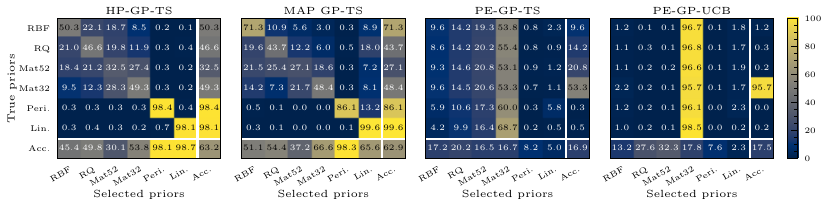}
    \caption{Confusion matrices for the true prior $p^*$ and the selected priors $p_t$ for the kernel experiment. 
    }
    \label{fig:kernelconfusion}
\end{figure}

\begin{figure}
    \centering
    \includegraphics[width=\linewidth, trim = {0 0.3em 0 0.3em}, clip]{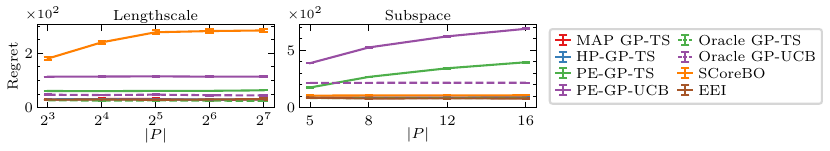}
    \caption{Total regret for the lengthscale and subspace experiments as $|P|$ increases.}
    \label{fig:regretscaling}
\end{figure}

\begin{figure*}
    \centering
    \includegraphics[width=\linewidth,trim = {0 0.5em 0 0.30em}, clip]{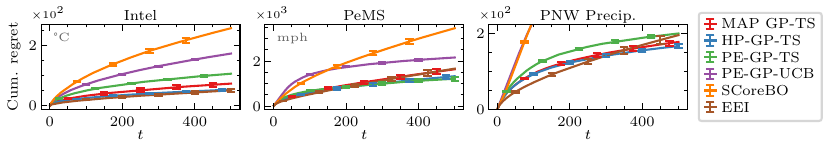}
    \caption{Cumulative regret on the real-world data experiments. Errorbars correspond to $\pm1$ standard error. The average final regret for SCoreBO and PE-GP-UCB is 861 and 506 on PNW.} 
    \label{fig:realworldregret}
\end{figure*}

\paragraph{Scaling $|P|$} We perform two experiments to understand how the regret of our algorithms scale with the number of priors. In both experiments, the average difficulty of the problem is kept constant such that the regret of the oracle models is constant. In the first experiment, we increase the discretization of the lengthscale values. The lengthscales are equidistantly spaced in $[0.5, 4]$ with $|P| \in \{8, 16, 32, 64, 128 \}$. 
As $|P|$ increases, the difference between similar priors is reduced.
In the second experiment, we increase the number of priors in the subspace experiment from 5 up to 16. Each prior can share at most 3 out of 4 dimensions with other priors which ensures the priors remain meaningfully different. The total regret as the number of priors increases is shown in \cref{fig:regretscaling}. For the lengthscale experiment, increasing the number of priors above 8 does not affect the regret for any algorithm, likely due to the increased redundancy in the priors. The one exception is SCoreBO, whose regret increases as $|P|$ is increased to 32 but levels off beyond that. In the subspace experiment, the regret of the prior elimination algorithms scales approximately as $\sqrt{|P|}$ whilst MAP- and HP-GP-TS are consistently close to the constant regret of the oracle. The regret of EEI drops initially but is otherwise constant and SCoreBO also has constant regret.

\paragraph{Real-world data}
We perform three experiments with real-world data from the Intel Berkeley dataset \citep{madden2004intel}, California Performance Measurement System (PeMS) \citep{chenFreeway2001, pems} and Pacific Northwest (PNW) daily precipitation dataset \citep{widmann_50_1999,widmann_validation_2000}. Each dataset contains measurements from a set of sensors over time. We split each dataset into a training and test set where the test set contains the last third of the data. Hence, the distribution of the test data may have shifted from the training data allowing us to test a realistic setting where the true prior is unknown but we have a set of reasonable priors. Each training set is further split into separate buckets which we use to estimate the empirical mean and covariance of the priors. See \cref{app:expdetails} for more details.

The cumulative regret for the experiments with real-world data is presented in \cref{fig:realworldregret}. Across these experiments, HP-GP-TS has either the lowest regret or is within 1 standard error of the algorithm with the lowest regret. SCoreBO has significantly higher regret than all other methods. Notably for the PeMS data, PE-GP-TS has the lowest average regret whereas MAP GP-TS and EEI perform worse compared to the other experiments. 

The number of priors remaining in $|P_t|$ and the hyperposterior entropy for the real-world data experiments is shown in \cref{fig:elimination,fig:entropy} in \cref{app:expresults}.
Similar to the synthetic experiments, on average, the prior elimination methods eliminate less than 1 prior at best and no priors (across all 500 seeds) at worst. In contrast, the hyperposterior of HP-GP-TS concentrates to the equivalent of 60-80\% of the probability mass to one prior. The relative standing in terms of reduced hyperposterior uncertainty between SCoreBO, HP-GP-TS and EEI remains consistent across all the experiments. 
SCoreBO reduces the hyperposterior uncertainty the most at the cost of significantly higher regret whereas HP-GP-TS provides a better balance between low regret and low hyperposterior uncertainty.

\section{Conclusion}
In this paper, we have studied two algorithms for adaptive prior selection 
and regret minimization 
in GP bandits based on GP-TS. 
We have analyzed the algorithms theoretically, corrected and improved upon previous work, and experimentally evaluated both algorithms on synthetic and real-world data. 
We find that lowering the amount of optimistic exploration leads the algorithms to obtain lower or comparable regret than previous work.

\section*{Acknowledgements}
The work of Jack Sandberg and Morteza Haghir Chehreghani was partially supported by the Wallenberg AI, Autonomous Systems and Software Program (WASP) funded by the Knut and Alice Wallenberg Foundation. The computations were enabled by resources provided by the National Academic Infrastructure for Supercomputing in Sweden (NAISS), partially funded by the Swedish Research Council through grant agreement no. 2022-06725.

\bibliographystyle{bibstyle}
\bibliography{references}

@misc{madden2004intel,
  title={Intel Lab Data},
  author={Madden, Samuel and others},
  url = {https://db.csail.mit.edu/labdata/labdata.html},
  urldate = {2024-01-22},
  year={2004}
}

@misc{pems,
    title = {{Caltrans Performance Measurement System}},
    author = {{California Department of Transportation}},
    url = {https://pems.dot.ca.gov/},
    urldate = {2024-01-22},
    year = {2024},
}

@misc{pems_tos,
    title = {{Caltrans Terms of Use}},
    author = {{California Department of Transportation}},
    url = {https://pems.dot.ca.gov/?dnode=Help&content=help_tou#ownership},
    urldate = {2026-04-14},
    year = {2026},
}

@article{chenFreeway2001,
  title = {Freeway {{Performance Measurement System}}: {{Mining Loop Detector Data}}},
  shorttitle = {Freeway {{Performance Measurement System}}},
  author = {Chen, Chao and Petty, Karl and Skabardonis, Alexander and Varaiya, Pravin and Jia, Zhanfeng},
  year = {2001},
  month = jan,
  journal = {Transportation Research Record},
  volume = {1748},
  number = {1},
  pages = {96--102},
  publisher = {SAGE Publications Inc},
  doi = {10.3141/1748-12},
  urldate = {2025-01-29},
}

@misc{gonzalvezFinancial2019,
      title = {Financial {{Applications}} of {{Gaussian Processes}} and {{Bayesian Optimization}}},
      author={Joan Gonzalvez and Edmond Lezmi and Thierry Roncalli and Jiali Xu},
      year={2019},
      eprint={1903.04841},
      archivePrefix={arXiv},
      primaryClass={q-fin.PM},
      url={https://arxiv.org/abs/1903.04841}, 
}

@inproceedings{turnerBayesian2021a,
  title = {Bayesian {{Optimization}} Is {{Superior}} to {{Random Search}} for {{Machine Learning Hyperparameter Tuning}}: {{Analysis}} of the {{Black-Box Optimization Challenge}} 2020},
  shorttitle = {Bayesian {{Optimization}} Is {{Superior}} to {{Random Search}} for {{Machine Learning Hyperparameter Tuning}}},
  booktitle = {Proceedings of the {{NeurIPS}} 2020 {{Competition}} and {{Demonstration Track}}},
  author = {Turner, Ryan and Eriksson, David and McCourt, Michael and Kiili, Juha and Laaksonen, Eero and Xu, Zhen and Guyon, Isabelle},
  year = {2021},
  month = aug,
  pages = {3--26},
  publisher = {PMLR},
  issn = {2640-3498},
}

@inproceedings{mockusBayesian1975,
  title = {{On Bayesian Methods for Seeking the Extremum}},
  booktitle = {Optimization {{Techniques IFIP Technical Conference Novosibirsk}}, {{July}} 1--7, 1974},
  author = {Mockus, J.},
  editor = {Marchuk, G. I.},
  year = {1975},
  pages = {400--404},
  publisher = {Springer},
  address = {Berlin, Heidelberg},
  isbn = {978-3-540-37497-8},
}

@misc{abbasi-yadkoriRegret2020,
  title = {Regret {{Balancing}} for {{Bandit}} and {{RL Model Selection}}},
  author = {{Abbasi-Yadkori}, Yasin and Pacchiano, Aldo and Phan, My},
  year = {2020},
  month = jun,
  number = {arXiv:2006.05491},
  eprint = {2006.05491},
  primaryclass = {cs},
  publisher = {arXiv},
  doi = {10.48550/arXiv.2006.05491},
  urldate = {2025-01-26},
  archiveprefix = {arXiv},
  url={https://arxiv.org/abs/2006.05491}, 
}

@article{akerblomOnlineLearningEnergy2023,
  title = {{Online Learning of Energy Consumption for Navigation of Electric Vehicles}},
  author = {{\AA}kerblom, Niklas and Chen, Yuxin and Haghir Chehreghani, Morteza},
  year = {2023},
  month = apr,
  journal = {Artificial Intelligence},
  volume = {317},
  pages = {103879},
  doi = {10.1016/j.artint.2023.103879},
  urldate = {2023-09-05},
  langid = {english},
}

@article{akerblomOnlineLearningNetwork2023,
  title = {{Online Learning of Network Bottlenecks via Minimax Paths}},
  author = {{\AA}kerblom, Niklas and Hoseini, Fazeleh Sadat and Haghir Chehreghani, Morteza},
  year = {2023},
  month = jan,
  journal = {Machine Learning},
  volume = {112},
  number = {1},
  pages = {131--150},
  doi = {10.1007/s10994-022-06270-0},
  urldate = {2023-09-07},
  langid = {english},
  keywords = {Bottleneck Identification,Combinatorial Semi-bandit,Online Learning,Thompson Sampling},
}

@inproceedings{basuNo2021,
  title = {No {{Regrets}} for {{Learning}} the {{Prior}} in {{Bandits}}},
  booktitle = {Advances in {{Neural Information Processing Systems}}},
  author = {Basu, Soumya and Kveton, Branislav and Zaheer, Manzil and Szepesvari, Csaba},
  year = {2021},
  volume = {34},
  pages = {28029--28041},
  publisher = {Curran Associates, Inc.},
  urldate = {2025-01-08},
}

@article{berkenkampNoRegret2019,
  title = {No-{{Regret Bayesian Optimization}} with {{Unknown Hyperparameters}}},
  author = {Berkenkamp, Felix and Schoellig, Angela P. and Krause, Andreas},
  year = {2019},
  journal = {Journal of Machine Learning Research},
  volume = {20},
  number = {50},
  pages = {1--24},
  issn = {1533-7928},
  urldate = {2024-10-14},
}

@inproceedings{chapelleEmpiricalEvaluationThompson2011,
  title = {An {{Empirical Evaluation}} of {{Thompson Sampling}}},
  booktitle = {Advances in {{Neural Information Processing Systems}}},
  author = {Chapelle, Olivier and Li, Lihong},
  year = {2011},
  volume = {24},
  publisher = {Curran Associates, Inc.},
  urldate = {2023-04-27},
}

@inproceedings{hongHierarchical2022,
  title = {Hierarchical {{Bayesian Bandits}}},
  booktitle = {Proceedings of {{The}} 25th {{International Conference}} on {{Artificial Intelligence}} and {{Statistics}}},
  author = {Hong, Joey and Kveton, Branislav and Zaheer, Manzil and Ghavamzadeh, Mohammad},
  year = {2022},
  month = may,
  pages = {7724--7741},
  publisher = {PMLR},
  issn = {2640-3498},
  urldate = {2024-12-20},
}

@inproceedings{kandasamyParallelisedBayesianOptimisation2018,
  title = {Parallelised {{Bayesian Optimisation}} via {{Thompson Sampling}}},
  booktitle = {Proceedings of the {{Twenty-First International Conference}} on {{Artificial Intelligence}} and {{Statistics}}},
  author = {Kandasamy, Kirthevasan and Krishnamurthy, Akshay and Schneider, Jeff and Poczos, Barnabas},
  year = {2018},
  month = mar,
  pages = {133--142},
  publisher = {PMLR},
  issn = {2640-3498},
  urldate = {2023-10-19},
}

@inproceedings{kvetonMetaThompson2021,
  title = {Meta-{{Thompson Sampling}}},
  booktitle = {Proceedings of the 38th {{International Conference}} on {{Machine Learning}}},
  author = {Kveton, Branislav and Konobeev, Mikhail and Zaheer, Manzil and Hsu, Chih-Wei and Mladenov, Martin and Boutilier, Craig and Szepesvari, Csaba},
  year = {2021},
  month = jul,
  pages = {5884--5893},
  publisher = {PMLR},
  issn = {2640-3498},
  urldate = {2024-12-20},
}

@misc{liModified2024,
  title = {Modified {{Meta-Thompson Sampling}} for {{Linear Bandits}} and {{Its Bayes Regret Analysis}}},
  author = {Li, Hao and Liang, Dong and Xie, Zheng},
  year = {2024},
  month = sep,
  number = {arXiv:2409.06329},
  eprint = {2409.06329},
  primaryclass = {stat},
  publisher = {arXiv},
  doi = {10.48550/arXiv.2409.06329},
  urldate = {2025-01-27},
  archiveprefix = {arXiv},
  keywords = {Computer Science - Machine Learning,Mathematics - Optimization and Control,Statistics - Machine Learning},
  url={https://arxiv.org/abs/2409.06329}, 
}

@article{nuaraCombinatorialBanditAlgorithmOnline2018,
  title = {A {{Combinatorial-Bandit Algorithm}} for the {{Online Joint Bid}}/{{Budget Optimization}} of {{Pay-per-Click Advertising Campaigns}}},
  author = {Nuara, Alessandro and Trov{\`o}, Francesco and Gatti, Nicola and Restelli, Marcello},
  year = {2018},
  month = apr,
  journal = {Proceedings of the AAAI Conference on Artificial Intelligence},
  volume = {32},
  number = {1},
  doi = {10.1609/aaai.v32i1.11888},
  urldate = {2023-10-18},
}

@misc{pacchianoRegret2020,
  title = {Regret {{Bound Balancing}} and {{Elimination}} for {{Model Selection}} in {{Bandits}} and {{RL}}},
  author = {Pacchiano, Aldo and Dann, Christoph and Gentile, Claudio and Bartlett, Peter},
  year = {2020},
  month = dec,
  number = {arXiv:2012.13045},
  eprint = {2012.13045},
  primaryclass = {cs},
  publisher = {arXiv},
  doi = {10.48550/arXiv.2012.13045},
  urldate = {2025-01-26},
  archiveprefix = {arXiv},
  keywords = {Computer Science - Artificial Intelligence,Computer Science - Machine Learning,Statistics - Machine Learning,Statistics - Other Statistics},
  url={https://arxiv.org/abs/2012.13045}, 
}

@article{russoLearningOptimizePosterior2014,
  title = {Learning to {{Optimize}} via {{Posterior Sampling}}},
  author = {Russo, Daniel and Van Roy, Benjamin},
  year = {2014},
  month = apr,
  journal = {Mathematics of Operations Research},
  publisher = {INFORMS},
  doi = {10.1287/moor.2014.0650},
  urldate = {2023-09-12},
  abstract = {This paper considers the use of a simple posterior sampling algorithm to balance between exploration and exploitation when learning to optimize actions such as in multiarmed bandit problems. The al...},
  copyright = {Copyright {\copyright} 2014, INFORMS},
  langid = {english},
}

@inproceedings{scarlettTight2018,
  title = {Tight {{Regret Bounds}} for {{Bayesian Optimization}} in {{One Dimension}}},
  booktitle = {Proceedings of the 35th {{International Conference}} on {{Machine Learning}}},
  author = {Scarlett, Jonathan},
  year = {2018},
  month = jul,
  pages = {4500--4508},
  publisher = {PMLR},
  issn = {2640-3498},
  urldate = {2024-11-15},
}

@article{srinivasInformationTheoretic2012,
  title = {Information-{{Theoretic Regret Bounds}} for {{Gaussian Process Optimization}} in the {{Bandit Setting}}},
  author = {Srinivas, Niranjan and Krause, Andreas and Kakade, Sham M. and Seeger, Matthias W.},
  year = {2012},
  month = may,
  journal = {IEEE Transactions on Information Theory},
  volume = {58},
  number = {5},
  pages = {3250--3265},
  doi = {10.1109/TIT.2011.2182033},
}

@inproceedings{vakiliInformationGainRegret2021,
  title = {On {{Information Gain}} and {{Regret Bounds}} in {{Gaussian Process Bandits}}},
  booktitle = {Proceedings of {{The}} 24th {{International Conference}} on {{Artificial Intelligence}} and {{Statistics}}},
  author = {Vakili, Sattar and Khezeli, Kia and Picheny, Victor},
  year = {2021},
  month = mar,
  pages = {82--90},
  publisher = {PMLR},
  issn = {2640-3498},
  urldate = {2023-10-06},
}

@misc{wangTheoretical2014,
  title = {Theoretical {{Analysis}} of {{Bayesian Optimisation}} with {{Unknown Gaussian Process Hyper-Parameters}}},
  author = {Wang, Ziyu and de Freitas, Nando},
  year = {2014},
  month = jun,
  number = {arXiv:1406.7758},
  eprint = {1406.7758},
  primaryclass = {stat},
  publisher = {arXiv},
  doi = {10.48550/arXiv.1406.7758},
  urldate = {2025-01-26},
  url={https://arxiv.org/abs/1406.7758}, 
}

@inproceedings{wenEfficientLearningLargeScale2015,
  title = {Efficient {{Learning}} in {{Large-Scale Combinatorial Semi-Bandits}}},
  booktitle = {Proceedings of the 32nd {{International Conference}} on {{Machine Learning}}},
  author = {Wen, Zheng and Kveton, Branislav and Ashkan, Azin},
  year = {2015},
  month = jun,
  pages = {1113--1122},
  publisher = {PMLR},
  issn = {1938-7228},
  urldate = {2024-06-07},
}

@inproceedings{ziomekBayesian2024,
  title = {Bayesian {{Optimisation}} with {{Unknown Hyperparameters}}: {{Regret Bounds Logarithmically Closer}} to {{Optimal}}},
  shorttitle = {Bayesian {{Optimisation}} with {{Unknown Hyperparameters}}},
  booktitle = {The {{Thirty-eighth Annual Conference}} on {{Neural Information Processing Systems}}},
  author = {Ziomek, Juliusz and Adachi, Masaki and Osborne, Michael A.},
  year = {2024},
  month = nov,
  urldate = {2025-01-30},
  langid = {english},
}

@book{williamsGaussianProcessesMachine2006,
  title = {Gaussian Processes for Machine Learning},
  author = {Williams, Christopher KI and Rasmussen, Carl Edward},
  year = {2006},
  volume = {2},
  publisher = {MIT Press Cambridge, MA},
  file = {/Users/jacksa/Zotero/storage/2J7EKZ78/Williams and Rasmussen - 2006 - Gaussian processes for machine learning.pdf}
}

@incollection{maternSpatial1986,
  booktitle = {Spatial {{Variation}}},
  title = {Spatial {{Variation}}},
  author = {Mat{\'e}rn, Bertil},
  editor = {Brillinger, D. and Fienberg, S. and Gani, J. and Hartigan, J. and Krickeberg, K.},
  year = {1986},
  series = {Lecture {{Notes}} in {{Statistics}}},
  volume = {36},
  publisher = {Springer},
  address = {New York},
  isbn = {978-0-387-96365-5},
}

@inproceedings{mackayIntroductionGaussianProcesses1998,
  title = {Introduction to {{Gaussian}} Processes},
  booktitle = {{{NATO ASI}} Series. {{Series F}} : Computer and System Sciences},
  author = {Mackay, Davic J.C.},
  year = {1998},
  pages = {133--165},
  issn = {0258-1248,},
  urldate = {2023-02-28},
  isbn = {978-3-540-64928-1},
  langid = {english},
}

@inproceedings{snoek_practical_2012,
    title = {Practical {Bayesian} {Optimization} of {Machine} {Learning} {Algorithms}},
    volume = {25},
    urldate = {2025-05-12},
    booktitle = {Advances in {Neural} {Information} {Processing} {Systems}},
    publisher = {Curran Associates, Inc.},
    author = {Snoek, Jasper and Larochelle, Hugo and Adams, Ryan P},
    year = {2012},
}

@inproceedings{de_ath_how_2021,
    address = {New York, USA},
    series = {{GECCO} '21},
    title = {How {Bayesian} should {Bayesian} optimisation be?},
    isbn = {978-1-4503-8351-6},
    urldate = {2025-05-06},
    booktitle = {Proceedings of the {Genetic} and {Evolutionary} {Computation} {Conference} {Companion}},
    publisher = {Association for Computing Machinery},
    author = {De Ath, George and Everson, Richard M. and Fieldsend, Jonathan E.},
    month = jul,
    year = {2021},
    pages = {1860--1869},
}

@incollection{benassi_robust_2011,
    title = {Robust {Gaussian} {Process}-{Based} {Global} {Optimization} {Using} a {Fully} {Bayesian} {Expected} {Improvement} {Criterion}},
    volume = {6683},
    isbn = {978-3-642-25565-6},
    language = {en},
    urldate = {2025-05-06},
    booktitle = {Learning and {Intelligent} {Optimization}},
    publisher = {Springer Berlin, Heidelberg},
    author = {Benassi, Romain and Bect, Julien and Vazquez, Emmanuel},
    editor = {Coello, Carlos A. Coello},
    year = {2011},
    pages = {176--190},
}

@inproceedings{hernandez-lobato_predictive_2014,
    title = {Predictive {Entropy} {Search} for {Efficient} {Global} {Optimization} of {Black}-box {Functions}},
    volume = {27},
    urldate = {2025-05-12},
    booktitle = {Advances in {Neural} {Information} {Processing} {Systems}},
    publisher = {Curran Associates, Inc.},
    author = {Hernández-Lobato, José Miguel and Hoffman, Matthew W. and Ghahramani, Zoubin},
    year = {2014},
}

@inproceedings{wang_max-value_2017,
    title = {Max-value {Entropy} {Search} for {Efficient} {Bayesian} {Optimization}},
    language = {en},
    urldate = {2025-05-12},
    booktitle = {Proceedings of the 34th {International} {Conference} on {Machine} {Learning}},
    publisher = {PMLR},
    author = {Wang, Zi and Jegelka, Stefanie},
    month = jul,
    year = {2017},
    note = {ISSN: 2640-3498},
    pages = {3627--3635},
}

@inproceedings{osborne_gaussian_2009,
  title = {{Gaussian Processes for Global Optimization}},
  booktitle = {3rd {{International Conference}} on {{Learning}} and {{Intelligent Optimization}} ({{LION3}})},
  author = {Osborne, Michael A. and Garnett, Roman and Roberts, Stephen J.},
  year = 2009,
  pages = {1--15},
}

@article{hvarfner_self-correcting_2023,
    title = {Self-{Correcting} {Bayesian} {Optimization} through {Bayesian} {Active} {Learning}},
    volume = {36},
    language = {en},
    urldate = {2024-10-14},
    journal = {Advances in Neural Information Processing Systems},
    author = {Hvarfner, Carl and Hellsten, Erik and Hutter, Frank and Nardi, Luigi},
    month = dec,
    year = {2023},
    pages = {79173--79199},
}

@inproceedings{ziomekTimevarying2025,
  title = {Time-Varying {{Gaussian Process Bandits}} with {{Unknown Prior}}},
  booktitle = {The 28th {{International Conference}} on {{Artificial Intelligence}} and {{Statistics}}},
  author = {Ziomek, Juliusz and Adachi, Masaki and Osborne, Michael A.},
  year = {2025},
  month = feb,
  langid = {english},
}

@misc{widmann_50_1999,
    title = {"50" km resolution daily precipitation for the {Pacific} {Northwest}, 1949-94},
    url = {http://research.jisao.washington.edu/data/widmann/},
    urldate = {2025-05-12},
    author = {Widmann, Martin and Bretherton, Christopher S.},
    month = may,
    year = {1999},
}

@article{widmann_validation_2000,
    title = {Validation of {Mesoscale} {Precipitation} in the {NCEP} {Reanalysis} {Using} a {New} {Gridcell} {Dataset} for the {Northwestern} {United} {States}},
    volume = {13},
    issn = {0894-8755, 1520-0442},
    language = {EN},
    number = {11},
    journal = {Journal of Climate},
    author = {Widmann, Martin and Bretherton, Christopher S.},
    month = jun,
    year = {2000},
    pages = {1936--1950},
}

@inproceedings{bogunovic_time-varying_2016,
    title = {Time-{Varying} {Gaussian} {Process} {Bandit} {Optimization}},
    language = {en},
    urldate = {2024-10-24},
    booktitle = {Proceedings of the 19th {International} {Conference} on {Artificial} {Intelligence} and {Statistics}},
    publisher = {PMLR},
    author = {Bogunovic, Ilija and Scarlett, Jonathan and Cevher, Volkan},
    month = may,
    year = {2016},
    note = {ISSN: 1938-7228},
    pages = {314--323},
}

@article{krause_near-optimal_2008,
    title = {Near-{Optimal} {Sensor} {Placements} in {Gaussian} {Processes}: {Theory}, {Efficient} {Algorithms} and {Empirical} {Studies}},
    volume = {9},
    issn = {1533-7928},
    shorttitle = {Near-{Optimal} {Sensor} {Placements} in {Gaussian} {Processes}},
    number = {8},
    urldate = {2025-05-02},
    journal = {Journal of Machine Learning Research},
    author = {Krause, Andreas and Singh, Ajit and Guestrin, Carlos},
    year = {2008},
    pages = {235--284},
}

@inproceedings{hongThompson2022,
  title = {Thompson {{Sampling}} with a {{Mixture Prior}}},
  booktitle = {Proceedings of {{The}} 25th {{International Conference}} on {{Artificial Intelligence}} and {{Statistics}}},
  author = {Hong, Joey and Kveton, Branislav and Zaheer, Manzil and Ghavamzadeh, Mohammad and Boutilier, Craig},
  year = 2022,
  month = may,
  pages = {7565--7586},
  publisher = {PMLR},
  issn = {2640-3498},
}

@inproceedings{balandat2020botorch,
  title = {{BoTorch: A Framework for Efficient Monte-Carlo Bayesian Optimization}},
  author = {Balandat, Maximilian and Karrer, Brian and Jiang, Daniel R. and Daulton, Samuel and Letham, Benjamin and Wilson, Andrew Gordon and Bakshy, Eytan},
  booktitle = {Advances in Neural Information Processing Systems 33},
  year = 2020,
  url = {http://arxiv.org/abs/1910.06403}
}

@inproceedings{gardner_gpytorch_2018,
    title = {{GPyTorch}: {Blackbox} {Matrix}-{Matrix} {Gaussian} {Process} {Inference} with {GPU} {Acceleration}},
    volume = {31},
    shorttitle = {{GPyTorch}},
    urldate = {2023-01-25},
    booktitle = {Advances in {Neural} {Information} {Processing} {Systems}},
    publisher = {Curran Associates, Inc.},
    author = {Gardner, Jacob R. and Pleiss, Geoff and Weinberger, Kilian Q and Bindel, David and Wilson, Andrew G},
    year = {2018},
}

@misc{linear_operator,
  author = {Pleiss, Geoff and Gardner, Jacob R. and Balandat, Maximilian and others},
  title = {Linear{O}perator},
  url = {https://github.com/cornellius-gp/linear_operator},
  year = {2025},
}

@misc{pleiss2022linearoperator,
  author    = {Pleiss, Geoff and Gardner, Jacob R. and Balandat, Maximilian and Ament, Sebastian},
  title     = {Linear\_{O}perator: Structured Linear Algebra in PyTorch},
  year      = {2022},
  howpublished = {PyTorch Conference Poster},
  url       = {https://pytorch.s3.amazonaws.com/posters/ptc2022/B05.pdf}
}

@article{luSurrogate2023,
    title = {Surrogate {Modeling} for {Bayesian} {Optimization} {Beyond} a {Single} {Gaussian} {Process}},
    volume = {45},
    issn = {1939-3539},
    url = {https://ieeexplore.ieee.org/abstract/document/10093035},
    doi = {10.1109/TPAMI.2023.3264741},
    number = {9},
    urldate = {2026-02-24},
    journal = {IEEE Transactions on Pattern Analysis and Machine Intelligence},
    author = {Lu, Qin and Polyzos, Konstantinos D. and Li, Bingcong and Giannakis, Georgios B.},
    month = sep,
    year = {2023},
    pages = {11283--11296},
}

@book{boucheronConcentration2013,
    title = {Concentration {Inequalities}: {A} {Nonasymptotic} {Theory} of {Independence}},
    isbn = {978-0-19-953525-5},
    shorttitle = {Concentration {Inequalities}},
    url = {https://academic.oup.com/book/26549},
    language = {en},
    urldate = {2024-10-04},
    publisher = {Oxford University Press},
    author = {Boucheron, Stéphane and Lugosi, Gábor and Massart, Pascal},
    month = feb,
    year = {2013},
    doi = {10.1093/acprof:oso/9780199535255.001.0001},
}

@unpublished{sandbergComments2026,
    author = {Sandberg, Jack and Haghir Chehreghani, Morteza},
    title = {{Comments on ``Surrogate Modeling for Bayesian Optimization Beyond a Single Gaussian Process''}},
    year = {2026},
    note = {Under review}
}


\appendix

\newpage
\section{Extended discussion of related work} \label{app:extended_related_work}
Plenty of previous work has proposed fully Bayesian approaches that integrate the acquisition function over the hyperposterior \citep{osborne_gaussian_2009,benassi_robust_2011, snoek_practical_2012, hernandez-lobato_predictive_2014, wang_max-value_2017,de_ath_how_2021}. A difficulty with such approaches is that to compute the expected acquisition function they must perform costly MCMC sampling over the hyperposterior. In contrast, HP-GP-TS optimizes a single hyperposterior sample instead of computing expected values over the hyperposterior.
\citet{hvarfner_self-correcting_2023} proposed Self-Correcting Bayesian Optimization (SCoreBO) whose objective function balances reducing the uncertainty of $(x^*, f^*)$ and reducing the uncertainty of the true prior $p^*$. Notably, SCoreBO explicitly tries to identify the prior rather than integrating out the uncertainty of the prior.  

\citet{wangTheoretical2014} first derived regret bounds for GP bandits with unknown lengthscale for the Expected Improvement algorithm \citep{mockusBayesian1975}. 
However, the proposed algorithm requires a lower bound on the lengthscale and the regret bound depends on the worst-case MIG.
Later work by \citet{berkenkampNoRegret2019} introduced Adaptive GP-UCB (A-GP-UCB) that continually lowers the lengthscale parameter. Given a sufficiently small lengthscale, the function $f$ lies within the reproducing kernel Hilbert space (RKHS) and the regular GP-UCB theory can be applied. However, A-GP-UCB lacks a stopping mechanism and will overexplore as the lengthscale continues to shrink. 
Recent work by \citet{ziomekTimevarying2025} introduced Prior-Elimination GP-UCB (PE-GP-UCB) for time-varying GP-bandits with unknown prior. Unlike the work before, the regret bound of PE-GP-UCB holds for arbitrary types of hyperparameters in the GP prior. PE-GP-UCB is doubly optimistic and selects the prior {\it and} arm with the highest upper confidence bound. PE-GP-UCB tracks the cumulative prediction error made by the selected priors and eliminates priors that exceed a threshold level.

Other works have introduced regret balancing algorithms that maintain a set of base learning algorithms and balance their selection frequency to achieve close to optimal regret \citep{abbasi-yadkoriRegret2020, pacchianoRegret2020}. \citet{ziomekBayesian2024} built on this idea and introduced length-scale balancing GP-UCB which can adaptively explore smaller lengthscales but can return to longer ones, unlike A-GP-UCB.

In addition to \citet{hongThompson2022,luSurrogate2023}, another line of work has studied Thompson sampling in standard and linear bandits with unknown prior distribution \citep{kvetonMetaThompson2021, basuNo2021, hongHierarchical2022,liModified2024}. In their setting (meta or hierarchical bandits), the agent plays multiple bandit instances, either simultaneously or sequentially. The unknown means are sampled from the same (unknown) prior and by gathering knowledge across instances, the agent can solve later instances more efficiently once it has identified the prior. In contrast, in this paper we consider the setting where the agent can only access information from the instance it is facing.

\section{Proofs} \label{app:allproofs}
In the following section, we state and prove the results shown in the main text.

\subsection{PE-GP-TS} \label{sec:proofsPEGPTS}
First, we state and prove concentration inequalities for $f(x)$ and $\tilde{f}_{t,p}(x)$. \cref{lem:concentrationintervals} is based on Lemma 5.1 of \citet{srinivasInformationTheoretic2012} but adapted to TS by specifying that it holds for any sequence of $x_1, \ldots, x_T$, as discussed by \citet{russoLearningOptimizePosterior2014}. Additionally, we add \cref{eq:sampleconfinv} which can be shown through the same steps and an additional union bound over $P$.

\begin{restatable}{lemma}{lemConcentrationIntervals} \label{lem:concentrationintervals}
    If $f(x) \sim \GP(\mu_{1, p^*}, k_{1, p^*})$ and $\beta_t = 2 \log \left ( \frac{|\Xc| |P| \pi^2 t^2}{3 \delta}\right)$. Then, with probability at least $1 - \delta$, the following holds for all $t, x, p \in [T] \times \Xc \times P$:
    \begin{align}
        | f(x) - \mu_{t, p^*}(x) | &\leq \sqrt{\beta_t} \sigma_{t, p^*}(x),
        \label{eq:postconfinv} \\
        | \tilde{f}_{t, p} (x) - \mu_{t, p} (x) | &\leq \sqrt{\beta_t} \sigma_{t, p}(x). \label{eq:sampleconfinv}
    \end{align}
\end{restatable}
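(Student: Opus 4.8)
The plan is to prove both inequalities via the standard Bayesian Gaussian concentration argument of \citet{srinivasInformationTheoretic2012}, applied once to the posterior of $f$ under the true prior $p^*$ and once to the Thompson samples under each prior, and then to take a union bound calibrated so that the factor $e^{-\beta_t/2}$ exactly absorbs the number of events.

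First, fix $t\in[T]$ and $x\in\Xc$. Under the Bayesian model with prior $p^*$, I would argue that, conditioned on the history $H_t=\{(x_i,y_i)\}_{i=1}^{t-1}$, we have $f(x)\mid H_t\sim\N\big(\mu_{t,p^*}(x),\sigma_{t,p^*}^2(x)\big)$. The one point requiring care is that $x_1,\dots,x_{t-1}$ are chosen adaptively; but each $x_i$ is measurable with respect to the $\sigma$-algebra generated by $H_i$ together with the algorithm's internal randomness (the Thompson samples), which is independent of $f$. Hence the design is predictable and the conditional law of $f$ given $H_t$ is exactly the GP posterior $\GP(\mu_{t,p^*},k_{t,p^*})$, so $\big(f(x)-\mu_{t,p^*}(x)\big)/\sigma_{t,p^*}(x)\mid H_t\sim\N(0,1)$. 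Applying the Gaussian tail bound $\Pr_{Z\sim\N(0,1)}(|Z|\ge c)\le e^{-c^2/2}$ with $c=\sqrt{\beta_t}$ shows that \eqref{eq:postconfinv} fails at $(t,x)$ with probability at most $e^{-\beta_t/2}=\frac{3\delta}{|\Xc||P|\pi^2 t^2}$ (and the same bound holds unconditionally after averaging over $H_t$).

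For \eqref{eq:sampleconfinv} the argument is even more direct: for every $p\in P$ the sample $\tilde f_{t,p}$ is by construction drawn from $\GP(\mu_{t,p},k_{t,p})$, whose parameters are $H_t$-measurable, so $\big(\tilde f_{t,p}(x)-\mu_{t,p}(x)\big)/\sigma_{t,p}(x)\mid H_t\sim\N(0,1)$ and the same tail bound gives failure probability at most $\frac{3\delta}{|\Xc||P|\pi^2 t^2}$ at each $(t,x,p)$; for priors already eliminated before time $t$ one may either regard the statement as vacuous or note it still holds for a hypothetical fresh sample, and the union bound is unaffected by any dependence among the samples.

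Finally, I would union-bound over all indices. For \eqref{eq:postconfinv}, summing over $x\in\Xc$ and $t\ge1$ gives total failure probability at most $\sum_{t\ge1}\sum_{x\in\Xc}\frac{3\delta}{|\Xc||P|\pi^2 t^2}=\frac{3\delta}{|P|\pi^2}\cdot\frac{\pi^2}{6}=\frac{\delta}{2|P|}\le\frac{\delta}{2}$; for \eqref{eq:sampleconfinv}, summing additionally over $p\in P$ gives at most $\frac{3\delta}{\pi^2}\cdot\frac{\pi^2}{6}=\frac{\delta}{2}$. Adding the two, both families of inequalities hold simultaneously for all $(t,x,p)\in[T]\times\Xc\times P$ with probability at least $1-\delta$, which is the claim. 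The main thing to get right is the predictability argument in the second step above — that adaptive, internally randomized arm selection leaks no information about $f$ beyond the observed rewards, so the GP posterior remains valid; everything after that is the textbook Gaussian tail union bound, and the particular form of $\beta_t$ is chosen precisely so that $e^{-\beta_t/2}$ cancels the $|\Xc||P|\pi^2 t^2/3$ produced by summing over the events.
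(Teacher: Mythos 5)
Your proof is correct and follows essentially the same route as the paper's: condition on the full history $H_t$ so that both $f(x)$ and $\tilde f_{t,p}(x)$ are conditionally Gaussian with the stated posterior moments, apply the tail bound $\Pr(|Z|>c)\le e^{-c^2/2}$ with $c=\sqrt{\beta_t}$, and union bound over $x$, $t$ (and additionally $p$ for the sample inequality) so that each of the two events fails with probability at most $\delta/2$. Your added remarks on predictability of the adaptive design and on eliminated priors are sound elaborations of what the paper leaves implicit.
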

\begin{proof}
    Follows by the same steps as Lemma 5.1 of Srinivas except we condition on the complete history $H_t$ instead of only $\y_{1:t-1}$. Additionally, for \cref{eq:sampleconfinv} we must take an additional union bound over $p \in P$.

    Fix $t, x, p \in [T] \times \Xc \times P$. Given the history $H_t$, $\tilde{f}_{t, p}(x) \sim \N(\mu_{t,p}(x), \sigma^2_{t,p}(x))$. 
    Using that $\mathbb{P}( Z > c ) \leq 1/2 e^{-c^2/2}$ for $Z \sim \N(0,1)$, we get that 
    \begin{align}
        \mathbb{P}\left( \left| \frac{\tilde{f}_{t, p}(x) - \mu_{t, p}(x) }{\sigma_{t, p} (x)} \right|  > \sqrt{\beta_t} \right) &\leq \exp( - \beta_t / 2 ) \\
        &= \frac{3 \delta }{ |\Xc| |P| \pi^2 t^2}
    \end{align}
    Note that $\sum_{t\geq1} \frac{1}{t^2} = \frac{\pi^2}{6}$. By taking the union bound over $\Xc$, $P$ and $t \geq 1$, \cref{eq:sampleconfinv} holds w.p. at least $1 - \delta/2$. By the same reasoning and skipping the union bound over $P$, \cref{eq:postconfinv} holds w.p. at least $1-\delta/2$. Thus, both events hold w.p. at least $1 - \delta$.
\end{proof}

Next, we state three lemmas from \citet{ziomekTimevarying2025} that are used in the proof of our regret bound.

\begin{lemma}{(Lemma 5.1 of \citet{ziomekTimevarying2025})} \label{lem:noiseconc}
    If $\xi_t = 2 \sigma^2 \log  \left( \frac{|P| \pi^2 t^2}{6 \delta} \right)$, then the following holds with probability at least $1 - \delta$:
    \begin{equation}
        \left| \sum_{i \in S_{t, p}} \epsilon_i \right| \leq \sqrt{\xi_t |S_{t, p}|} \quad \forall t, p \in [T] \times P.
    \end{equation}
\end{lemma}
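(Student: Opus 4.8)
The plan is to reduce the claim to a one‑dimensional Gaussian tail bound by reindexing the sum along the (random) times at which prior $p$ is selected. Fix $p\in P$ and let $\tau_1<\tau_2<\cdots$ be the successive steps at which $p_t=p$. Work with the filtration $(\F_s)_s$ in which $\F_s$ contains the function $f$, the noise $\epsilon_1,\dots,\epsilon_s$, and all algorithmic randomness used up to and including step $s+1$; then each $\tau_j$ is a stopping time (whether $p$ is played for the $j$‑th time at step $s$ is determined by $p_1,\dots,p_s$, which are $\F_{s-1}$‑measurable since $p_s$ depends only on $H_s$ and the step‑$s$ randomness), while $\epsilon_{s+1}\mid\F_s\sim\N(0,\sigma^2)$ because the noise is independent of $f$, of the past noise, and of the algorithm. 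Appending fresh i.i.d.\ $\N(0,\sigma^2)$ variables if $p$ is played fewer than $n$ times, set $M_n^{(p)}:=\sum_{j=1}^n\epsilon_{\tau_j}$. The key identity is $\sum_{i\in S_{t,p}}\epsilon_i=M_{|S_{t,p}|}^{(p)}$, since $S_{t,p}=\{s\le t:p_s=p\}$ is exactly the set of the first $|S_{t,p}|$ selection times of $p$.

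First I would show $M_n^{(p)}\sim\N(0,n\sigma^2)$ for each fixed $n$, by induction on $n$ via the moment generating function: conditioning on $\F_{\tau_n-1}$, the increment $\epsilon_{\tau_n}$ is $\N(0,\sigma^2)$ and independent of the $\F_{\tau_n-1}$‑measurable quantity $M_{n-1}^{(p)}$ (the standard optional‑stopping fact for sums of independent variables), so $\E[e^{\lambda M_n^{(p)}}]=\E[e^{\lambda M_{n-1}^{(p)}}]\,e^{\lambda^2\sigma^2/2}=e^{n\lambda^2\sigma^2/2}$. Applying $\mathbb{P}(|Z|>c)\le e^{-c^2/2}$ for $Z\sim\N(0,1)$ to $M_n^{(p)}/(\sigma\sqrt n)$ gives
\[
\mathbb{P}\!\left(|M_n^{(p)}|>\sqrt{\xi_n n}\right)\le\exp\!\left(-\tfrac{\xi_n}{2\sigma^2}\right)=\frac{6\delta}{|P|\,\pi^2 n^2}.
\]
A union bound over $p\in P$ and $n\ge1$, using $\sum_{n\ge1}n^{-2}=\pi^2/6$, shows that with probability at least $1-\delta$ we have $|M_n^{(p)}|\le\sqrt{\xi_n n}$ for all $p\in P$ and all $n\ge1$ simultaneously. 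On that event, fix any $t,p$ and put $n=|S_{t,p}|$: the case $n=0$ is trivial, and for $n\ge1$, since $n\le t$ and $t\mapsto\xi_t$ is non‑decreasing, $\left|\sum_{i\in S_{t,p}}\epsilon_i\right|=|M_n^{(p)}|\le\sqrt{\xi_n n}\le\sqrt{\xi_t|S_{t,p}|}$, which is the claim.

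The main obstacle is precisely the first step: $S_{t,p}$ is not a deterministic index set but one whose membership is entangled with the very noise variables being summed, so a Gaussian tail bound cannot be applied to $\sum_{i\in S_{t,p}}\epsilon_i$ directly; the optional‑stopping reindexing is what makes this rigorous. A secondary subtlety is that the naive approach of union‑bounding over all $t\in[T]$ and over all possible cardinalities $|S_{t,p}|\in\{1,\dots,t\}$ would incur an extra factor $t$ and hence a divergent $\sum_t 1/t$; replacing $\xi_t$ by the smaller $\xi_{|S_{t,p}|}$ — legitimate because $|S_{t,p}|\le t$ and $\xi$ is non‑decreasing — collapses the union bound to the convergent $\sum_n 1/n^2$ and yields exactly the stated constant.
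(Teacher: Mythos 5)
Your proof is correct. Note, however, that the paper does not prove this statement at all: it is imported verbatim as Lemma~5.1 of \citet{ziomekTimeVarying2025} and used as a black box, so there is no in-paper argument to compare yours against. On its own merits, your argument handles the two genuine difficulties properly. First, you correctly identify that $S_{t,p}$ is a noise-dependent index set, and the optional-skipping reindexing is legitimate here because $p_s$ (and hence the event $s\in S_{t,p}$) is measurable with respect to $H_s$ and the step-$s$ algorithmic randomness, all of which precede the revelation of $\epsilon_s$; the MGF induction then gives $M_n^{(p)}\sim\N(0,n\sigma^2)$ exactly, so the one-sided bound $\Pb(Z>c)\le\tfrac12 e^{-c^2/2}$ applies and the union bound over $p\in P$ and $n\ge 1$ lands on exactly $\delta$ with the stated constant $|P|\pi^2/6$. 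Second, your observation that one must union-bound over the \emph{count} $n=|S_{t,p}|$ rather than over pairs $(t,|S_{t,p}|)$, and then pass from $\xi_n$ to $\xi_t$ using $n\le t$ and monotonicity of $\xi$, is precisely what makes the constants work; a naive union bound over $t$ and all cardinalities would not. The only cosmetic caveat is that $\xi_1\ge 0$ requires $\delta\le |P|\pi^2/6$, which is harmless.
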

\begin{lemma}{(Lemma 5.2 of \citet{ziomekTimevarying2025})} \label{lem:fbound}
    Let $B_{p^*} = \beta_1 + \sup_{x \in \Xc} |\mu_{1, p^*}(x)|$, then if $\mu_{1,p^*}$ and $k_{1,p^*}$ satisfy $|\mu_{1,p^*} (\cdot)| < \infty$ and $k_{1,p^*}(\cdot, \cdot) \leq 1$ and \cref{lem:concentrationintervals} holds, then 
    \begin{equation}
        \sup_{x \in \Xc} |f(x)| \leq B_{p^*}.
    \end{equation}
\end{lemma}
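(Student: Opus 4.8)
The plan is to apply \cref{lem:concentrationintervals} at the very first time step $t=1$, where the posterior coincides with the prior, and then combine it with the triangle inequality and the kernel normalization $k_{1,p^*}\le 1$. Note first that since $\Xc$ is finite and $|\mu_{1,p^*}(\cdot)|<\infty$, the quantity $B_{p^*}=\beta_1+\sup_{x\in\Xc}|\mu_{1,p^*}(x)|$ is finite, so the claim is non-vacuous.

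Concretely, I would proceed as follows. Since $H_1=\emptyset$, the posterior at $t=1$ is the prior: $\mu_{1,p^*}$ is the prior mean and $\sigma^2_{1,p^*}(x)=k_{1,p^*}(x,x)\le 1$, hence $\sigma_{1,p^*}(x)\le 1$ for all $x\in\Xc$. Instantiating \cref{eq:postconfinv} of \cref{lem:concentrationintervals} with $t=1$ (which is allowed, as the lemma quantifies over all $t\in[T]$) gives, on the event of probability at least $1-\delta$, $|f(x)-\mu_{1,p^*}(x)|\le\sqrt{\beta_1}\,\sigma_{1,p^*}(x)\le\sqrt{\beta_1}$ for every $x\in\Xc$. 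Next I would observe that $\beta_1\ge 1$: from the schedule $\beta_t=2\log(2|\Xc||P|\pi^2 t^2/3\delta)$ one already has $\beta_1=2\log(2|\Xc||P|\pi^2/3\delta)\ge 2\log(2\pi^2/3)>1$ because $|\Xc|,|P|\ge 1$ and $\delta\le 1$. Therefore $\sqrt{\beta_1}\le\beta_1$, so $|f(x)-\mu_{1,p^*}(x)|\le\beta_1$, and the triangle inequality yields $|f(x)|\le|\mu_{1,p^*}(x)|+\beta_1\le\sup_{x'\in\Xc}|\mu_{1,p^*}(x')|+\beta_1=B_{p^*}$. Taking the supremum over $x\in\Xc$ completes the argument.

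There is essentially no analytic obstacle here; the only care needed is the constant bookkeeping, namely checking that $\beta_1\ge 1$ under the confidence schedule in force (so that the slightly wasteful step $\sqrt{\beta_1}\le\beta_1$ is valid) and that we are entitled to specialize \cref{lem:concentrationintervals} to $t=1$. The looseness introduced by replacing $\sqrt{\beta_1}$ with $\beta_1$ is harmless downstream, since $B_{p^*}$ enters the final regret bound only through the lower-order additive term $2|P|B_{p^*}$.
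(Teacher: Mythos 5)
Your proof is correct: the paper imports this statement from \citet{ziomekTimeVarying2025} without reproving it, and your argument (specialize \cref{eq:postconfinv} to $t=1$ where the posterior is the prior, use $\sigma_{1,p^*}(x)^2=k_{1,p^*}(x,x)\le 1$, check $\beta_1\ge 1$ so $\sqrt{\beta_1}\le\beta_1$, and finish with the triangle inequality) is exactly the standard route and matches the stated constant $B_{p^*}=\beta_1+\sup_{x\in\Xc}|\mu_{1,p^*}(x)|$. No gaps.
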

\begin{lemma}{(Lemma 5.3 of \citet{ziomekTimevarying2025})} \label{lem:infgainbound}
    For $C = 2 / \log(1 + \sigma^{-2})$, $\sum_{t \notin \mathcal{C}} \sqrt{\beta_t} \sigma_{t, p_t} (x_t) \leq \sqrt{C T \beta_T \hat{\gamma}_{T} |P|}$ where $\beta_T = \max_{p \in P} \beta_T$ and $\hat{\gamma}_T = \max_{p \in P} {\gamma}_{T,p}$.
\end{lemma}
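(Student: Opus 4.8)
The plan is to follow the classical information-gain argument of \citet{srinivasInformationTheoretic2012}, but with extra care since the active prior $p_t$ varies across rounds. First I would apply Cauchy--Schwarz to the subsampled sum and pull out the largest confidence parameter:
\[
\sum_{t\notin\mathcal{C}}\sqrt{\beta_t}\,\sigma_{t,p_t}(x_t) \le \sqrt{|\{t\notin\mathcal{C}\}|}\;\sqrt{\sum_{t\notin\mathcal{C}}\beta_t\,\sigma^2_{t,p_t}(x_t)} \le \sqrt{T\,\beta_T\sum_{t=1}^{T}\sigma^2_{t,p_t}(x_t)},
\]
using that $\beta_t$ is nondecreasing in $t$ (so $\beta_t\le\beta_T$), that there are at most $T$ rounds, and that discarding the rounds in $\mathcal{C}$ only removes nonnegative terms so the remaining sum is dominated by the full sum. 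It then suffices to show $\sum_{t=1}^{T}\sigma^2_{t,p_t}(x_t)\le C\,|P|\,\hat{\gamma}_T$ with $C=2/\log(1+\sigma^{-2})$.

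For that sum I would invoke the elementary inequality $u\le \tfrac{1}{\log(1+\sigma^{-2})}\log(1+\sigma^{-2}u)$, valid for $u\in[0,1]$ by concavity of $u\mapsto\log(1+\sigma^{-2}u)$ relative to its chord on $[0,1]$. This applies with $u=\sigma^2_{t,p_t}(x_t)$ because $k_{1,p}(x,x)\le 1$ forces $\sigma^2_{t,p}(x)\le 1$, yielding
\[
\sum_{t=1}^{T}\sigma^2_{t,p_t}(x_t) \le \frac{1}{\log(1+\sigma^{-2})}\sum_{t=1}^{T}\log\!\big(1+\sigma^{-2}\sigma^2_{t,p_t}(x_t)\big).
\]
I would then split the log-sum according to which prior was active, writing $\sum_{t=1}^{T}(\cdots)=\sum_{p\in P}\sum_{t\in S_{T,p}}\log(1+\sigma^{-2}\sigma^2_{t,p}(x_t))$.

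The main obstacle is that each $\sigma^2_{t,p}(x_t)$ is the posterior variance of prior $p$ conditioned on the \emph{entire} history $H_t$ (every selected arm, regardless of which prior triggered it), so the telescoping information-gain identity does not apply to the subsequence $S_{T,p}$ out of the box. The key step is monotonicity of the GP posterior variance under additional conditioning: since $H_t$ contains the sub-history of $S_{T,p}$-rounds observed before $t$, we have $\sigma^2_{t,p}(x_t)\le \tilde{\sigma}^2_{t,p}(x_t)$, where $\tilde{\sigma}^2_{t,p}$ denotes the posterior variance of prior $p$ conditioned \emph{only} on the rounds in $S_{T,p}$. For that sub-history the sequential identity holds, $\sum_{t\in S_{T,p}}\log(1+\sigma^{-2}\tilde{\sigma}^2_{t,p}(x_t)) = 2\,I_p(\y_{S_{T,p}};f)$, and by the definition of the MIG this is at most $2\gamma_{|S_{T,p}|,p}\le 2\gamma_{T,p}\le 2\hat{\gamma}_T$. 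Summing over the $|P|$ priors gives $\sum_{t=1}^{T}\log(1+\sigma^{-2}\sigma^2_{t,p_t}(x_t))\le 2|P|\hat{\gamma}_T$, hence $\sum_{t=1}^{T}\sigma^2_{t,p_t}(x_t)\le C|P|\hat{\gamma}_T$; substituting into the Cauchy--Schwarz bound yields $\sqrt{C\,T\,\beta_T\,\hat{\gamma}_T\,|P|}$. I expect the variance-monotonicity reduction and the bookkeeping of the per-prior MIG to be the only delicate points; everything else is routine.
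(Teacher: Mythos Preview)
The paper does not prove this lemma itself; it is simply quoted as Lemma~5.3 of \citet{ziomekTimeVarying2025} in \cref{sec:proofsPEGPTS} and used as a black box in the cumulative regret bound. So there is no in-paper proof to compare your proposal against.

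That said, your argument is correct and is precisely the standard Srinivas-style information-gain computation one would expect here: Cauchy--Schwarz, the $u\le \tfrac{1}{\log(1+\sigma^{-2})}\log(1+\sigma^{-2}u)$ chord bound, and the telescoping identity $\sum_{t}\log(1+\sigma^{-2}\sigma^2_t(x_t))=2I(\y;f)$. The only nonstandard ingredient is exactly the one you flagged: because $\sigma^2_{t,p}$ in this paper is conditioned on the \emph{entire} history $H_t$, not just the $S_{T,p}$-subhistory, you need the monotonicity of GP posterior variance under additional conditioning to replace $\sigma^2_{t,p}$ by the subhistory-only variance $\tilde\sigma^2_{t,p}$ before telescoping per prior. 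That step is sound (it is just the Schur-complement/information-never-hurts fact for Gaussian conditioning), and after it the per-prior bound $2\gamma_{|S_{T,p}|,p}\le 2\hat\gamma_T$ and the sum over $|P|$ priors go through exactly as you wrote.
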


\begin{lemma} \label{lem:nevereliminated}
    If the events of \cref{lem:concentrationintervals,lem:noiseconc} hold, then PE-GP-TS never eliminates the true prior $p^*$.
\end{lemma}
\begin{proof}
    For any $t \in [T]$,
    \begin{align}
        \left| \sum_{i \in S_{t, p^*}} \eta_i \right| &= \left| \sum_{i \in S_{t, p^*}} (y_i - f(x_i) + f(x_i) - \mu_{i, p^*}(x_i) \right| \\
        &\leq \left| \sum_{i \in S_{t, p^*}} \epsilon_i \right| + \sum_{i \in S_{t, p^*}} \left| f(x_i) - \mu_{i, p^*}(x_i) \right| 
        & (\text{Triangle ineq.})\\
        &\leq \sqrt{\xi_t |S_{t, p^*}| } + \sum_{i \in S_{t, p^*}} \sqrt{\beta_i} \sigma_{i, p^*}(x_i).
        &(\text{\cref{lem:noiseconc,lem:concentrationintervals})}
    \end{align}
    Therefore, the elimination criteria on \cref{line:elimincriteria} in \cref{alg:PEGPTS}, $|\sum_{i \in S_{t,p_t} \eta_i}| > V_t$, always evaluates to \texttt{false} for $p_t = p^*$.
\end{proof}

Then, we state and prove the new instantaneous regret bound for PE-GP-TS. 

\lemSimpleRegret*
\begin{proof}
    First, we upper bound $f(x^*)$ as follows
    \begin{align}
        f(x^*) &\leq \mu_{t, p^*}(x^*) + \sqrt{\beta_t} \sigma_{t, p^*} (x^*)
        &(\text{\cref{eq:postconfinv}}) \\
        &\leq \tilde{f}_{t, p^*}(x^*) + 2 \sqrt{\beta_t} \sigma_{t, p^*}(x^*) 
        &(\text{\cref{eq:sampleconfinv}}) \\
        &\leq \tilde{f}_{t, p_t}(x_t) + 2 \sqrt{\beta_t} \sigma_{t, p^*}(x^*).
        &(\text{TS selection rule and \cref{lem:nevereliminated}}) \label{eq:simpleupper}
    \end{align}
    For the final step, note that $p^* \in P_t$ by \cref{lem:nevereliminated}.
    Then, we lower bound $f(x_t)$
    \begin{align}
        f(x_t) &= \mu_{t, p_t}(x_t) + \eta_t - \epsilon_t 
        &(\text{Def. of } \eta_t) \\
        &\geq \tilde{f}_{t, p_t} (x_t) - \sqrt{\beta_t} \sigma_{t, p_t} (x_t) + \eta_t - \epsilon_t.
        &(\text{\cref{eq:sampleconfinv}}) \label{eq:simplelower}
    \end{align}
    Combining, \cref{eq:simpleupper,eq:simplelower} we obtain 
    \begin{align}
        f(x^*) - f(x_t) \leq 2 \sqrt{\beta_t} \sigma_{t, p^*} (x^*) + \sqrt{\beta_t} \sigma_{t, p_t}(x_t) - \eta_t + \epsilon_t. \label{eq:simpleregretbound}
    \end{align}
\end{proof}

Finally, we state and prove the cumulative regret bound for PE-GP-TS.

\thmPEGPTSregretbound*
\begin{proof}    
    To establish a bound on the cumulative regret, we separate out the rounds where priors are eliminated. Hence, define the set of critical iterations as
    \begin{equation}
        \mathcal{C} = \left\{ t \in [T] : \left| \sum_{i \in S_{t, p_t}} \eta_i \right| > \sqrt{\xi_t S_{t, p_t}} + \sum_{i \in S_{t, p_t}} \sqrt{\beta_i} \sigma_{i, p_t}(x_i) \right\}.
    \end{equation}
    Note that $|\C| \leq |P|$.
    Using \cref{lem:fbound,eq:simpleregretbound}, we can bound the cumulative regret as follows:
    \begin{align}
        R(T) &= \sum_{t \in \mathcal{C}} f(x^*) - f(x_t) + \sum_{t \notin \mathcal{C}} f(x^*) - f(x_t) \\
        &\leq 2 |P| B_{p^*}  + \sum_{t \notin \mathcal{C}} 2 \sqrt{\beta_t} \sigma_{t, p^*}(x^*) 
        + \sum_{t \notin \mathcal{C}} \sqrt{\beta_t} \sigma_{t, p_t} (x_t) + \sum_{p \in P} \sum_{t \in S_{T, p} \setminus \mathcal{C}} (\epsilon_t - \eta_t).
    \end{align}
    where $B_{p^*} := \beta_1 + \sup_{x \in \Xc} | \mu_{1, p^*} (x)|$.
    If $t \notin \mathcal{C}$, \cref{line:elimincriteria} in \cref{alg:PEGPTS} evaluates to {\tt false} and hence
    \begin{equation}
        \sum_{p \in P} \sum_{t \in S_{T, p} \setminus \mathcal{C}} -\eta_t \leq \sum_{p \in P} \sqrt{\xi_T |S_{T, p}|} + \sum_{p \in P} \sum_{t \in S_{T, p} \setminus \mathcal{C}} \sqrt{\beta_t} \sigma_{t, p}(x_t). 
    \end{equation}
    Additionally, using \cref{lem:noiseconc}, we can bound the Gaussian noise:
    \begin{align}
        \sum_{p \in P} \sum_{t \in S_{T, p} \setminus \mathcal{C}} \epsilon_t 
        &\leq \sum_{p \in P} \left| \sum_{t \in S_{T, p} \setminus \mathcal{C}} \epsilon_t \right| 
        \\
        &\leq \sum_{p \in P} \sqrt{\xi_T |S_{T, p} \setminus \C|} 
        &(\text{\cref{lem:noiseconc}})\\
        &\leq \sum_{p \in P} \sqrt{\xi_T |S_{T, p}|}  \\
        &\leq \sqrt{\xi_T |P| T}. 
        & (\text{Cauchy-Schwarz})
    \end{align}
    Combining the above, the cumulative regret is bounded by
    \begin{equation}
        R(T) \leq 2 |P| B_{p^*} + 2 \sqrt{\xi_T |P| T} 
        + 2 \sum_{t \notin \mathcal{C}} \sqrt{\beta_t} \sigma_{t, p^*}(x^*)
        + 2 \sum_{t \notin \mathcal{C}} \sqrt{\beta_t} \sigma_{t, p_t}(x_t).
    \end{equation}
    Finally, applying \cref{lem:infgainbound}, we obtain the result
    \begin{equation}
        R(T) \leq 2 |P| B_{p^*} + 2 \sqrt{\xi_T |P| T} + 2 \sqrt{T \beta_T \sumt \sigma^2_{t,p^*}(x^*)} +2 \sqrt{C T \beta_T \hat{\gamma}_{T} |P|}.
    \end{equation}
\end{proof}

\subsection{HP-GP-TS} \label{sec:proofsHPGPTS}
In this section, we state and prove our regret bound for HP-GP-TS. We begin by proving \cref{lem:infgain_selected_leq_avg_infgain}.

\lemInfgainSelectedLeqAvgInfgain*
\begin{remark}
    For the RBF and Matérn kernels, the known upper bounds for the maximum information gain are concave \citep{srinivasInformationTheoretic2012,vakiliInformationGainRegret2021}, thereby satisfying the conditions of \cref{lem:infgain_selected_leq_avg_infgain}. 
\end{remark}
\begin{proof}
    We begin by showing that $\E[N_T(p)] = P_1(p)T$.
    \begin{align}
        \E[N_T(p)] &= \E \Big[ \sumt \indfcn{p_t = p} \Big] \\
        &= \sumt \E_{H_t} \left[ \E \left[ \indfcn{p_t = p} \big| H_t \right] \right]
        && (\text{Tower rule}) \\
        &= \sumt \E_{H_t} \left[ \E \left[ \indfcn{p^* = p} \big| H_t \right] \right]
        && (p_t | H_t \eqd p^* | H_t) \\
        &= \sumt \E \left[ \indfcn{p^* = p} \right] \\
        &= \sumt P_1(p) = P_1(p) T. 
        && \left( P_1(p) = \Pb(p^* = p) \right) \label{eq:expected_num_selections}
    \end{align}
    Then, by the intermediate steps of Lemma 5.3 of \citet{ziomekTimevarying2025} $\sumt \sigma^2_{t,p_t}(x_t) \leq C \sump \gamma_{N_T(p), p}$. We include the proof here for completeness and introduce some helpful notation. 
    Let $A$ be a multiset over $\Xc$ s.t. $|A| < \infty$, we define 
    \begin{equation}
        \sigma_{A,p}^2(x) = k_{1,p}(x,x) - \k_{A,p}(x)^T (\K_{A,p} + \sigma^2 I)^{-1} \k_{A,p}(x),
    \end{equation}
    where $\K_{A,p} = [k_{1,p}(x, x')]_{x,x' \in A}$ and $\k_{A,p} = [k_{1,p}(x,x')]_{x' \in A}$ with elements repeated by their multiplicity in $A$. Then, let $A_{t,p} = \{ x_i : i \in [t-1], p_i = p \}$ be the multiset of arms queried whilst selecting prior $p$. For any two multisets $S,S'$ such that $S \subseteq S'$, we have that $\sigma_{S',p}^2(x) \leq \sigma_{S,p}^2(x)$ for all $x \in \Xc$. Since $A_{t,p}$ is a subset of the arms collected over the history $H_t$, we have that $\sigma_{t,p_t}^2(x_t) \leq \sigma_{A_{t,p},p}^2(x_t)$. By the proof of Lemma 5.4 of \citet{srinivasInformationTheoretic2012}, $\sigma_{A_{t,p},p}^2(x_t) \leq C \log(1 + \sigma^{-2} \sigma_{A_{t,p},p}^2(x_t))$ for $C = 2 /\log(1 + \sigma^{-2})$. By reorganizing the sum over $t \in [T]$ into a sum over $p \in P$, and applying Lemma 5.3 of \citet{srinivasInformationTheoretic2012} yields that
    \begin{equation}
        \sumt \sigma_{t,p_t}^2(x_t) \leq \sum_{p \in P} \sum_{t \in [T] : p_t = p} \sigma_{A_{t,p},p}^2(x_t) \leq C \sump \gamma_{|A_{T,p}|,p} = C \sump \gamma_{N_{T}(p),p}. \label{eq:sum_post_var_leq_sum_inf_gain}
    \end{equation}
    Finally, we combine the results above through a concavity argument.
    \begin{align}
        \E \left[ \sumt \sigma_{t,p_t}^2(x_t) \right]
        &\leq C \sum_{p \in P} \E[\gamma_{N_T(p),p}]
        &&(\text{\cref{eq:sum_post_var_leq_sum_inf_gain}})\\
        &\leq C \sum_{p \in P} \E[\Gamma_p(N_T(p))]
        &&(\gamma_{t,p} \leq \Gamma_{p}(t), \forall t,p \in [T] \times P) \\
        &\leq C \sum_{p \in P} \Gamma_p (\E[N_T(p)] ) 
        &&(\text{$\Gamma_{p}(t)$ concave, Jensen's ineq.})\\
        &\leq C \sum_{p \in P} \Gamma_p ( P_1(p) T ).
        &&(\text{\cref{eq:expected_num_selections}})
    \end{align}
\end{proof}

Next, we prove that the true prior is in the confidence set with high probability. Recall that we define the excess reward for prior $p$ at time $t$ as
\begin{equation}
    G_t(p) = \sum_{s=1}^{t-1} \indfcn{p_s=p}\left(
        \mu_{s,p_s}(x_s) - \sqrt{\eta_T} \sigma_{s,p_s} - f(x_s) - \epsilon_s 
    \right) \label{eq:excess_reward}
\end{equation}
where $\eta_T = 2\log |\Xc| T^6$.
Let $\xi_t(p) = \sigma \sqrt{14 N_{t-1}(p) \log(T)}$ where $N_{t}(p) = \sum_{s=1}^{t} \indfcn{p_s = p}$ denotes how often the prior $p$ was selected up to and including time $t$. Then, we define the confidence set at time $t$ as 
\begin{equation} 
    \C_t = \left\{ 
        p \in P: G_\tau(p) \leq \xi_\tau(p) \,\, \forall \tau \leq t  
    \right\}. \label{eq:confidence_set}
\end{equation}
For notational convenience, we consider the history $H_t = (p_i, x_i, y_i)_{i=1}^{t-1}$ with the selected priors $(p_i)_{i=1}^t$ augmented such that $\C_t$ and $(N_{t-1}(p))_{p \in P}$ are deterministic conditioned on $H_t$. 

\begin{restatable}{lemma}{lemTruePriorInConfSet} \label{lem:true_prior_in_conf_set}
    For any $t \in [T]$, $\Pb(p^* \notin \C_t) \leq \frac{3}{T^5}$.
\end{restatable}

\begin{proof}
    Note that $\C_t$ is monotonically decreasing due to the time-uniform definition of $\C_t$, i.e. $\C_s \supseteq \C_t$ for any $s < t$. Thus, $\Pb( p^* \notin \C_t) \leq \Pb( p^* \notin \C_T)$ and we focus on bounding $\Pb( p^* \in \C_T)$.

    Let $E = \cap_{t=1}^{T-1} E_t$ where $E_t = \{|f(x) - \mu_{t,p^*}(x)| \leq \sqrt{\eta_T} \sigma_{t,p^*}(x), \forall x \in \Xc \}$. Then, by the law of total probability
    \begin{align}
        \Pb( p^* \notin \C_T) &= \underbrace{\Pb( p^* \notin \C_T | E^c)}_{\leq 1} \underbrace{\Pb(E^c)}_{\leq 1/T^5} + \underbrace{\Pb( p^* \notin \C_T | E) \Pb(E)}_{\leq 2 / T^5}
    \end{align}
    where the bounds for $\Pb(E^c)$ and $\Pb(p^* \notin \C_T | E) \Pb(E)$ are shown below.
    \begin{align}
        \Pb(E^c) &= \Pb\left( \exists t \in [T-1], x \in \Xc: |f(x) - \mu_{t,p^*}(x)| > \sqrt{\eta_T} \sigma_{t,p^*}(x) \right) \\
        &\leq \sum_{t \in [T-1]} \sum_{x \in \Xc} \Pb\left( |f(x) - \mu_{t,p^*}(x)| > \sqrt{\eta_T} \sigma_{t,p^*}(x) \right) \\
        &\leq \sum_{t \in [T-1]} \sum_{x \in \Xc} \E_{H_t, p^*}\left[ \Pb\left( \frac{|f(x) - \mu_{t,p^*}(x)|}{\sigma_{t,p^*}(x) } > \sqrt{\eta_T} \bigg| H_t, p^* = p \right) \right] \\
        &\leq \sum_{t \in [T-1]} \sum_{x \in \Xc} \E_{H_t,p^*}\left[ \exp\left( -\frac{\eta_T }{ 2} \right) \right] 
         \hspace{6.5em} \left( \parbox{11em}{\centering $\Pb(|r| > \sqrt{c}) \leq \exp(-c/2)$ \\ for $r \sim \N(0,1)$, $c \geq 0$} \right) \\
        &= \sum_{t \in [T-1]} \sum_{x \in \Xc}  \E_{H_t,p^*}\left[ \frac{1}{|\Xc| T^6} \right]
        \hspace{12em} \left( \eta_T = 2 \log (|\Xc| T^6)\right) \\
        & = \frac{1}{T^5}.
    \end{align}
    
    Next, we bound the right term $\Pb(p^* \notin \C_t | E)$. Recall that $p^* \notin \C_T$ is equivalent to $\exists t \in [T]$ such that $G_t(p^*) > \xi_t(p^*)$. Hence,
    \begin{align}
        \Pb(p^* \notin \C_T | E) &= \Pb( \exists t \in [T]: G_t(p^*) > \xi_t(p^*) | E) \\
        &\leq \sumt \Pb( G_t(p^*) > \xi_t(p^*) | E) 
        & \left( \text{Union bound} \right)
        \\ &\begin{multlined}[b]
        =\sumt \Pb\Bigg( \sum_{s=1}^{t-1} \indfcn{p_s = p^*} \Big( \mu_{s,p^*}(x_s) - \sqrt{\eta_T} \sigma_{t,p^*}(x_s) 
        \\
        - f(x_s) - \epsilon_s \Big) > \xi_t(p^*) \Big| E \Bigg).
        \end{multlined}
    \end{align}
    Given $E$, $\mu_{s,p^*}(x_s) - \sqrt{\eta_T} \sigma_{s,p^*}(x_s) - f(x_s) \leq 0$, $\forall s \in [T-1]$ and therefore
    \begin{align} \label{eq:pstar_not_in_Ct_given_E}
        \Pb(p^* \notin \C_T | E) \Pb(E) 
        &\leq \sum_{t \in [T]} \Pb\left( \sum_{s=1}^{t-1} \indfcn{p_s = p^*} (-\epsilon_s) >  \xi_t(p^*) \Big| E \right)\Pb(E) \\
        &\leq \sum_{t \in [T]} \Pb\left( \left| \sum_{s=1}^{t-1} \indfcn{p_s = p^*} (-\epsilon_s) \right| >  \xi_t(p^*) \Big| E \right)\Pb(E) \\
        &= \sum_{t \in [T]} \Pb\left( \left| \sum_{s=1}^{t-1} \indfcn{p_s = p^*} (-\epsilon_s) \right| >  \xi_t(p^*) , E \right) 
        \\
        &\leq \sum_{t \in [T]} \Pb\left( \left| \sum_{s=1}^{t-1} \indfcn{p_s = p^*} (-\epsilon_s) \right| >  \xi_t(p^*) \right) \\
        &\begin{aligned}[b] = \sum_{t \in [T]} \sum_{p \in P} &\Pb\left( 
            \left| \sum_{s=1}^{t-1} \indfcn{p_s = p} \epsilon_s \right| > \sigma \sqrt{14 N_{t-1}(p) \log T}
            \,\Bigg|\, p^* =p 
        \right) \\ &\quad \cdot \Pb(p^*=p) \end{aligned} \label{eq:selfnorm_conc_used}\\
        &\leq \sum_{t \in [T]} \sum_{p \in P} \frac{2}{T^6} \Pb(p^*=p) \hspace{12em} (\text{\cref{lem:selfnorm_conc_ineq}}) \label{eq:selfnorm_conc_used2}\\
        &\leq \frac{2}{T^5}
    \end{align}
\end{proof}

Next, we prove the self-normalizing concentration inequality for the sum of Gaussian noises as pulled by each prior that we used in \cref{eq:selfnorm_conc_used}. 

\begin{lemma} \label{lem:selfnorm_conc_ineq}
    Let $S_{t,p} = \sum_{s=1}^{t-1} \indfcn{p_s=p} \epsilon_s$ and $\alpha > 0$, then 
    \begin{equation}
        \Pb \left( |S_{t,p}| > \sigma \sqrt{2(\alpha + 1) N_{t-1}(p) \log(T)}  \,\Big|\, p^* = p \right) \leq \frac{2}{T^{\alpha}}, \quad \forall p \in P.
    \end{equation}
\end{lemma}
\begin{remark}
Note that similar results have been shown by \citet[Proof of Lemma 3]{hongThompson2022}, \citet[Lemma 4]{luSurrogate2023}, and \cite[Lemma 5.1]{ziomekTimevarying2025}. We found the arguments in the proofs of \citet{hongThompson2022,luSurrogate2023} uncompelling due to their brevity. Whilst the proof of \citet{ziomekTimevarying2025} is clearer, we provide a proof using a martingale technique as a complement.
\end{remark}
\begin{proof}
    Fix $t \in [T]$ and $p^* = p$, for the remainder of this proof all probabilities condition on $p^*=p$. We begin by defining the event 
    \begin{align}
        \mathcal{F} &:= \left\{ S_{t,p} > \sigma \sqrt{2(\alpha+1) N_{t-1}(p) \log(T)} \right\} \\
        &= \bigcup_{k=1}^{t-1} \underbrace{\left\{ S_{t,p} > \sigma \sqrt{2 (\alpha+1) k \log(T)} \,\cap\, N_{t-1}(p) = k \right\}}_{\mathcal{F}_k :=}.
    \end{align}
    To bound the probability of the events $\mathcal{F}_k$, we introduce a martingale $M_t(\lambda)$ for $\lambda > 0$ into $\mathcal{F}_k$ as follows:

    \begin{align}
        \mathcal{F}_k &= \left\{ \lambda S_{t,p} > \lambda \sigma \sqrt{2(\alpha+1) k \log(T)} \,\cap\, N_{t-1}(p) = k \right\} \hspace{10em} (\lambda > 0) \\
        &= \left\{ 
        \lambda S_{t,p} - \frac{\lambda^2\sigma^2}{2}N_{t-1}(p)
        > \lambda \sigma \sqrt{2(\alpha+1) k \log(T)} - \frac{\lambda^2\sigma^2}{2}k 
        \,\cap\, N_{t-1}(p) = k 
        \right\} \\
        &\begin{multlined}[b]= \Bigg\{ 
        \underbrace{
            \exp \left( \lambda S_{t,p} - \frac{\lambda^2\sigma^2}{2}N_{t-1}(p) \right)
        }_{M_t(\lambda) :=}
        > \exp \left( \lambda \sigma \sqrt{2(\alpha+1) k \log(T)} - \frac{\lambda^2\sigma^2}{2} k \right)
        \\
        \,\cap\, N_{t-1}(p) = k 
        \Bigg\}.
        \end{multlined}
    \end{align}
    To tighten the bound of the probability of $\mathcal{F}_k$, we select $\lambda_k = \sqrt{\frac{2 (\alpha+1) \log T}{\sigma^2 k}}$, yielding:
    \begin{align}
        \mathcal{F}_k &= 
        \left\{ 
        M_t(\lambda_k)
        > \exp \left( (\alpha+1) \log T \right)
        \,\cap\, N_{t-1}(p) = k 
        \right\} \\
        &\subseteq \left \{ M_t\left(\sqrt{\frac{2 (\alpha+1) \log T}{\sigma^2 k}}\right) \geq T^{\alpha+1} \right\}.
    \end{align}
    Since $M_t(\lambda) \geq 0$, by Markov's inequality,
    \begin{align}
        \Pb\left( \mathcal{F}_k \right) \leq \Pb\left( M_t\left( \lambda_k \right) \geq T^{\alpha+1} \right) \leq \frac{\Eb\left[ M_t(\lambda_k) \right]}{T^{\alpha+1}}. \label{eq:boundFk}
    \end{align}
    Next, it remains to show that $M_t (\lambda)$ is a martingale such that $\Eb[M_t (\lambda)] = 1$. Let $\mathcal{H}_{t-1} = \{p_s, \epsilon_s \}_{s=1}^{t-2}$ be the history of selected priors and noise up to and including time $t-2$, then 
    \begin{align}
        \E[M_t(\lambda) | \mathcal{H}_{t-1}] &= M_{t-1}(\lambda) \cdot \Eb \left[ \exp \left( \lambda \indfcn{p_{t-1}=p} \epsilon_{t-1} - \lambda^2\sigma^2 \indfcn{p_{t-1} = p} / 2 \right) \big| \mathcal{H}_{t-1} \right] \\
        &= M_{t-1}(\lambda) \cdot \Big( 
        \underbrace{\Eb \left[ 
            \exp(0) | p_{t-1} \neq p,\mathcal{H}_{t-1} \right]}_{=1} \Pb(p_{t-1} \neq p | \mathcal{H}_{t-1} )
        \\ &\quad+\underbrace{\Eb \left[
            \exp(\lambda \epsilon_{t-1} - \lambda^2 \sigma^2/2) 
            | p_{t-1} = p, \mathcal{H}_{t-1} 
        \right]}_{= 1 \text{ since } \epsilon_{t-1} \ind p_{t-1}, \mathcal{H}_{t-1}} \Pb(p_{t-1} = p, \mathcal{H}_{t-1}) \Big)
        \\
        &= M_{t-1}(\lambda).
    \end{align}
    Applying the above recursively to $\Eb[M_t (\lambda)]$ and defining $M_{1}(\lambda) = 1$, we get that $\Eb[M_t (\lambda)]  =1$. Thus, from \cref{eq:boundFk} and a union bound over $k \in [T-1]$, $\Pb(S_{t,p} > \sigma \sqrt{2 (\alpha+1) N_{t-1}(p) \log (T)}) \leq 1 / T^\alpha$. By symmetry of the Gaussian noise, $\Pb(|S_{t,p}| > \sigma \sqrt{2(\alpha+1) N_{t-1}(p) \log (T)}) \leq 2 / T^\alpha$.    
\end{proof}

Finally, we are ready to state and prove the regret bound for HP-GP-TS.

\thmHPGPTS*
\begin{proof}
Recall that $p^*, x^* | H_t \eqd p_t, x_t | H_t$ and that $U_{t,p}(x)$ is a deterministic function w.r.t. $p$ and $x$ conditioned on the history $H_t$, therefore $\E[U_{t,p^*}(x^*)] = \E[U_{t,p_t}(x_t)]$ follows by the tower rule. We begin by decomposing the Bayesian regret into three terms and show the bounds that we will later obtain for each of them. 
\begin{align}
    \BR(T) &= \sumt \E[ f(x^*) - f(x_t)]\\
    &= \sumt \E[ f(x^*) - U_{t,p^*}(x^*) + U_{t,p_t}(x_t) - f(x_t)] 
    \hspace{3.5em} \left( p^*, x^* | H_t \eqd p_t, x_t | H_t \right) \\
    &\begin{multlined}[b]
    = \sumt \E\bigg[ f(x^*) - U_{t,p^*}(x^*) + (\sqrt{\beta_t} + \sqrt{\eta_T})\sigma_{t,p_t}(x_t) \\ 
    + \mu_{t,p_t}(x_t) - \sqrt{\eta_T} \sigma_{t,p_t}(x_t) - f(x_t)\bigg] 
    \end{multlined}
    \hspace{3em} \left( \pm \sqrt{\eta_T} \sigma_{t,p_t}(x_t) \right)
    \\
    &= \underbrace{\sumt \E\left[ f(x^*) - U_{t,p^*}(x^*) \right]}_{A_1} 
    + \underbrace{\sumt \E \left[ (\sqrt{\beta_t} + \sqrt{\eta_T})\sigma_{t,p_t}(x_t) \right]}_{A_2} 
    \\ &\quad 
    + \underbrace{\sumt \E \left[ \mu_{t,p_t}(x_t) - \sqrt{\eta_T} \sigma_{t,p_t}(x_t) - f(x_t) \right ]}_{A_3}. \\
    &\begin{multlined}[b]
    \leq 
    \underbrace{\frac{\pi^2}{6}}_{A_1} 
    + \underbrace{\sqrt{CT \bar{\gamma}_T(P_1)} (\sqrt{\beta_T} + \sqrt{\eta_T})}_{A_2} 
    \\+ \underbrace{
    \frac{\sqrt{3}}{T}\left( \sqrt{\sigma^{-2}( \bar{M} + 2M \mumax + \mumax^2 + \sigma^2)} +\sqrt{\bar{M} / T}\right) 
    }_{A_{3,1}}
    \\ + \underbrace{
        \sigma \sqrt{14 T|P| \log T} + |P| \left( \sigma^{-1} \sqrt{T}(M + \mumax + \sigma)
        + M + \sigma \sqrt{2 \log T} \right).
    }_{A_{3,2}} 
    \end{multlined}
\end{align}
Next, we will prove the bounds for the terms $A_1$, $A_2$, and $A_3$ where the bound for $A_3$ is given by the sum of $A_{3,1}$ and $A_{3,2}$.

\paragraph{Bounding $A_1$}
Since the upper confidence term in $A_1$, $U_{t,p^*}(x^*)$, corresponds to the confidence bound of the true prior $p^*$, the bound for $A_1$ follows by standard techniques \citep{russoLearningOptimizePosterior2014}:
\begin{align}
    A_1 &= \sumt \E \left[ f(x^*) - U_{t,p^*}(x^*) \right] \\
    &\leq \sumt \E \left[ \left[ f(x^*) - U_{t,p^*}(x^*) \right]_{+} \right]
    \hspace{14.25em} \left( [\cdot]_{+} := \max(\cdot, 0 ) \right) \\
    &\leq \sumt \sum_{x \in \Xc} \E \left[ \left[ f(x) - U_{t,p^*}(x) \right]_{+} \right]
    \hspace{13.25em} \left(x^* \in \Xc, [\cdot]_+ \geq 0\right) \\
    &= \sumt \sum_{x \in \Xc} \E_{p^*, H_t} \left[ \E_{t} \left[ \left[ f(x) - \mu_{t, p^*}(x) - \sqrt{\beta_t} \sigma_{t, p^*}(x) \right]_+ \big|\, p^*, H_t \right] \right]. 
    \hspace{1em} (\text{Tower rule})
\end{align}
Recall that for $Z \sim \N(\mu, \sigma)$ with $\mu \leq 0$, $\E[ [Z]_+ ] \leq \frac{\sigma}{\sqrt{2\pi}} \exp\left( \frac{-\mu^2}{2 \sigma^2} \right)$. 
In our case, note that $f(x) | p^*, H_t \sim \N(\mu_{t, p^*}(x), \sigma^2_{t, p^*}(x))$ and $-\mu_{t,p^*}(x) - \sqrt{\beta_t} \sigma_{t, p^*}(x)$ is deterministic given $p^*, H_t$. Hence,
\begin{align}
    A_1 &\leq \sumt \sum_{x \in \Xc} \E_{p^*, H_t} \left[ \frac{\sigma_{t, p^*}(x)}{\sqrt{2\pi}} \exp\left( \frac{- \beta_t}{2} \right) \right] \\
    &\leq \sumt \sum_{x \in \Xc} \E_{p^*, H_t} \left[ \frac{1}{\sqrt{2\pi}} \exp \left( \frac{-\beta_t}{2} \right) \right] 
    &(\sigma_{t, p^*}(x) \leq \sigma_{0, p^*}(x) \leq 1) \\
    &= \sumt \sum_{x \in \Xc} \frac{1}{\sqrt{2\pi}} \exp( -\beta_t / 2) \\
    &= \sumt \frac{1}{t^2} \leq \frac{\pi^2}{6}.  
    &( \beta_t = 2 \log(|\Xc| t^2 / \sqrt{2\pi}) )
\end{align}

\paragraph{Bounding $A_2$}
To bound $A_2$, we separate it into two terms and apply Cauchy-Schwarz to each term:
\begin{align}
    A_2 &= \E \left[ \sumt \sqrt{\beta_t}\sigma_{t,p_t}(x_t) \right] + \E \left[ \sumt \sqrt{\eta_T}\sigma_{t,p_t}(x_t) \right] \\
    &\leq \E \left[ \sqrt{ \sumt \beta_t \sumt \sigma_{t,p_t}^2(x_t)} \right] + \E \left[ \sqrt{ \sumt \eta_T \sumt \sigma^2_{t,p_t}(x_t) }\right] 
    &(\text{Cauchy-Schwarz}) \\
    &\leq \E \left[ \sqrt{ T \beta_T \sumt \sigma_{t,p_t}^2(x_t)} \right] + \E \left[ \sqrt{ T \eta_T \sumt \sigma^2_{t,p_t}(x_t) }\right]
    &(\beta_t \leq \beta_T, \forall t \in [T]) \\
    &=  \sqrt{T} (\sqrt{\beta_T} + \sqrt{\eta_T}) \E \Bigg[ \sqrt{ \sumt \sigma_{t,p_t}^2(x_t)} \Bigg] \\
    &\leq  \sqrt{T} (\sqrt{\beta_T} + \sqrt{\eta_T}) \sqrt{\E \Big[  \sumt \sigma_{t,p_t}^2(x_t) \Big]}.
    &(\text{Jensen's inequality})
\end{align}

By \cref{lem:infgain_selected_leq_avg_infgain}, we have that $\E\big[\sumt \sigma_{t,p_t}^2(x_t) \big] \leq C \bar{\gamma}_T(P_1)$ and therefore
\begin{align}
    A_2 \leq \sqrt{C T \bar{\gamma}_T(P_1)} \left(\sqrt{\beta_T} + \sqrt{\eta_T}\right).
\end{align}

\paragraph{Bounding $A_3$}
We further split $A_3$ based on whether $p_t \in \C_t$ holds: 
\begin{align}
    A_3 &= \underbrace{\sumt \E \left[ (\mu_{t,p_t}(x_t) - \sqrt{\eta_T} \sigma_{t,p_t}(x_t) - f(x_t)) \indfcn{p_t \notin \C_t} \right ]}_{A_{3,1}} \\
    & \phantom{=} + \underbrace{\sumt \E \left[ (\mu_{t,p_t}(x_t) - \sqrt{\eta_T} \sigma_{t,p_t}(x_t) - f(x_t)) \indfcn{p_t \in \C_t} \right ]}_{A_{3,2}}.
\end{align}
\paragraph{Bounding $A_{3,1}$:} To bound $A_{3,1}$, we apply the Cauchy-Schwarz inequality for expectations to separate the factors $\mu_{t,p_t}(x_t) - f(x_t)$ and $\1\{p_t \notin \C_t\}$ into different expectations as follows:
\begin{align}
    A_{3,1} &\leq \sumt \E \left[ (\mu_{t,p_t}(x_t) - f(x_t)) \indfcn{p_t \notin \C_t} \right ] 
    \hspace{9em} \left( \sqrt{\eta_T} \sigma_{t,p_t}(x_t) \geq 0 \right) \\
    &= \sumt \left( \E \left[ \mu_{t,p_t}(x_t) \indfcn{p_t \notin \C_t} \right ] +
    \E \left[ - f(x_t) \indfcn{p_t \notin \C_t} \right ] \right) \\
    &\begin{multlined}[b]
        \leq \sumt \bigg( \sqrt{\E \left[ (\mu_{t,p_t}(x_t))^2 \right] \E \left[ (\indfcn{p_t \notin \C_t})^2 \right ] } \\ 
        + \sqrt{\E \left[ (f(x_t))^2 \right] \E \left[ (\indfcn{p_t \notin \C_t})^2 \right] } \bigg)
    \end{multlined}
    \hspace{3.5em} \left( \E[XY] \leq \sqrt{\E[X^2] \E[Y^2]} \right)\\
    &\leq \sumt \sqrt{ \E \left[ \indfcn{p^* \notin \C_t} \right ]} \left( \sqrt{\E \left[ (\mu_{t,p_t}(x_t))^2 \right]} + \sqrt{\E \left[ \big(\sup_{x \in \Xc} |f(x)| \big)^2 \right]} \right).
    \hspace{0.5em} (p^* | H_t \eqd p_t | H_t)
\end{align}
To bound the three expectations above, we have from \cref{lem:true_prior_in_conf_set} that $\E \left[ \indfcn{p^* \notin \C_t} \right ] \leq 3 T^{-5}$ and from \cref{lem:expected_mean_bound} that $\E \left[ (\mu_{t,p_t}(x_t))^2 \right] \leq \sigma^{-2}T( \bar{M} + 2M \mumax + \mumax^2 + \sigma^2)$ for all $t \in [T]$. Similarly, by \cref{lem:bound_sup_f_squared} we have that $\E \left[ \big(\sup_{x \in \Xc} |f(x)| \big)^2 \right] \leq \bar{M} $. Put together, we arrive at the following bound for $A_{3,1}$:
\begin{equation}
    A_{3,1} 
    \leq \frac{\sqrt{3}}{T} \left( \sqrt{\sigma^{-2}( \bar{M} + 2M \mumax + \mumax^2 + \sigma^2)} + \sqrt{ \bar{M} /T } \right).
\end{equation}

\paragraph{Bounding $A_{3,2}$:} Then, $A_{3,2}$ can be bound as follows:
\begin{align}
    A_{3,2} &= \sumt \E \left[ \left(\mu_{t,p_t}(x_t) - \sqrt{\eta_T} \sigma_{t,p_t}(x_t) - f(x_t) - \epsilon_t \right) \indfcn{p_t \in \C_t} \right ] 
    \hspace{1.5em} \left( \epsilon_t \ind \indfcn{p_t \in \C_t}\right) \\
    &\leq \sum_{p \in P} \E \left[ \sumt \left(\mu_{t,p}(x_t) - \sqrt{\eta_T} \sigma_{t,p}(x_t) - f(x_t) - \epsilon_t \right) \indfcn{p_t = p} \indfcn{p \in \C_t} \right] \label{eq:prior-confidence-set}
\end{align}
We define the final time step where prior $p$ is selected and is in the confidence set as $\tau_p := \max \left\{t \in [T] : p_t = p, p \in \C_t \right\}$. Then, $\sum_{t \in [\tau_p-1]} (\mu_{t,p_t}(x_t) - \sqrt{\eta_T}\sigma_{t,p_t}(x_t) - f(x_t)-\epsilon_t)\indfcn{p_t = p}\indfcn{ p \in \C_t} = G_{\tau_p}(p)$ since $\C_t$ is a shrinking sequence of sets. By definition of $p \in \C_{\tau_p}$ (\cref{eq:confidence_set}), $G_{\tau_p}(p)  \leq \sigma \sqrt{14 N_{\tau_p-1}(p) \log T}$ and

\begin{align}
    A_{3,2} &\leq \sum_{p \in P} \E \left[ \sigma \sqrt{ 14 N_{\tau_p-1}(p) \log T} + \left(\mu_{\tau_p,p}(x_t) - \sqrt{\eta_T} \sigma_{\tau_p,p}(x_{\tau_p}) - f(x_{\tau_p}) - \epsilon_{\tau_p} \right) \right] \label{eq:post-confidence-set}\\
    &\leq \sum_{p \in P} \E \left[ \sigma \sqrt{14 N_T(p) \log T} \right] + \sum_{p \in P} \E \left[ \left(\mu_{\tau_p,p}(x_{\tau_p}) - f(x_{\tau_p}) - \epsilon_{\tau_p} \right) \right] \label{eq:noise_bound_and_final_excess}
\end{align}
since $\sqrt{\eta_T} \sigma_{t,p}(x) \geq 0,$ and $N_t(p) \leq N_T(p)$, $\forall t,p,x \in [T] \times P \times \Xc$. To bound the left term in \cref{eq:noise_bound_and_final_excess}, we apply the Cauchy-Schwarz inequality such that $\sum_{p \in P} \sqrt{N_T(p)} \leq \sqrt{T|P|}$. For the right term in \cref{eq:noise_bound_and_final_excess}, we note that $\tau_p \in [T]$ and consider the maximum:
\begin{align}
    &\E \left[ \left(\mu_{\tau_p,p}(x_{\tau_p}) - f(x_{\tau_p}) - \epsilon_{\tau_p} \right) \right] \\
    &\leq
    \sum_{p \in P} \E \left[ \max_{t \in [T]} \left(\mu_{t,p}(x_t) - f(x_{t}) - \epsilon_{t} \right) \right] \\
    &\leq  
    \sum_{p \in P} \left( \E \left[ \max_{t \in [T]} \mu_{t,p}(x_t) \right]
    + \E \left[ \sup_{x \in \Xc} |f(x)| \right] 
    + \E \left[ \max_{t \in [T]} - \epsilon_{t} \right] \right) \\
    &\leq 
    |P| \left( \sigma^{-1} \sqrt{T}(M + \mumax + \sigma)
    + M
    + \sigma \sqrt{2 \log T} \right). 
    && \left( \text{\cref{lem:expected_mean_bound}} \right)
\end{align}
The bound $\E[ \max_{t \in [T]} -\epsilon_t] \leq \sigma \sqrt{2 \log T}$ follows by standard results for independent zero-mean Gaussians \citep[Section 2.5]{boucheronConcentration2013}. Combined, we get that
\begin{equation}
    A_{3,2} \leq \sigma \sqrt{14 T|P| \log T} + |P| \left( \sigma^{-1} \sqrt{T}(M + \mumax + \sigma)
    + M
    + \sigma \sqrt{2 \log T} \right).
\end{equation}
\end{proof}
\begin{remark} \label{rem:time-uniform-confset}
    Unlike \citet{hongThompson2022,luSurrogate2023}, our definition of the confidence set $\C_t$ includes a condition that the excess reward $G_s(p)$ is below the threshold $\xi_s(p)$ for all $s \leq t$, not just $s = t$. This guarantees that the sets are non-increasing in size, and therefore if $p \in \C_t$ then $p \in C_s$ for all $s < t$. Furthermore, $\sum_{s = 1}^{t-1} (\mu_{s,p_s}(x_s) - \sqrt{\eta_T} \sigma_{s,p_s}(x_s) - y_s) \indfcn{p=p_s}\indfcn{p \in \C_s} = G_t(p)$ if $p \in \C_{t-1}$ which is critical to go from \cref{eq:prior-confidence-set} to \cref{eq:post-confidence-set}. Without the time-uniform requirement, $p \notin C_s$ could hold for some $s < t$ s.t. $\mu_{s,p_s}(x_s) - \sqrt{\eta_T} \sigma_{s,p_s}(x_s) - y_s < 0$. Then, $\sum_{s = 1}^{t-1} (\mu_{s,p_s}(x_s) - \sqrt{\eta_T} \sigma_{s,p_s}(x_s) - y_s) \indfcn{p=p_s}\indfcn{p \in \C_s} > G_t(p)$ which prevents bounding $G_t(p)$ by $\xi_t(p)$.
\end{remark}

\subsection{Auxiliary lemmas}
In this section, we state and prove auxiliary lemmas that bound the expectations of $\mu_{t,p}(x_t)$ and $\sup_{x \in \Xc} |f(x)|^2$.
\begin{lemma}\label{lem:expected_mean_bound}
Let $\mumax = \sup_{p,x \in P \times \Xc} \mu_{1,p}(x)$, $M = \E[ \sup_{x \in \Xc} |f(x)|]$, $M_p = \E[ \sup_{x \in \Xc} |f(x)| | p^* = p]$, $M_\Delta = \max_{p \in P} M_p - \min_{p \in P} M_p$, and $\bar{M} = M^2 + 1 + \frac{M_\Delta^2}{4}$. If $k_p(x,x): \Xc \times \Xc \mapsto [-1,1]$, $\forall p \in P$, then 
\begin{align}
    \E[\mu_{t,p_t}(x_t)^2] \leq \frac{T}{\sigma^2} ( \bar{M} + 2M \mumax + \mumax^2 + \sigma^2), \\
    \E[\maxt \mu_{t,p}(x_t)] \leq \frac{\sqrt{T}}{\sigma}(M + \mumax + \sigma).
\end{align}

\end{lemma}
\begin{proof}
To begin, recall that $\mu_{t,p}(x) = \k_{t,p}(x)^T \left( \K_{t,p} + \sigma^2 I \right)^{-1} (\f_{1:t-1} + \beps_{1:t-1} - \bmu_{1:t-1,p})$ where $\f_{1:t} = [f(x_1), \ldots, f(x_{t-1})]^T$, $\beps_{1:t} = [\epsilon_1, \ldots, \epsilon_{t-1}]$, and $\bmu_{1:t,p} = [\mu_{1,p}(x_1), \ldots, \mu_{1,p}(x_{t-1})]$. Additionally, we note that 
\begin{align}
\norm[\Big]{ \k_{t,p}(x)^T \left( \K_{t,p} + \sigma^2 I \right)^{-\frac{1}{2}} }_2^2 = \k_{t,p}(x)^T \left( \K_{t,p} + \sigma^2 I \right)^{-1} \k_{t,p}(x) \leq k_p(x, x) \leq 1,
\end{align}
$\forall t,p,x \in [T] \times P \times \Xc$ by the definition of the posterior variance $\sigma^2_{t,p}(x)$ and since the posterior variance is non-negative $\sigma^2_{t,p}(x) \geq 0$. Similarly, note that $\norm[\Big]{\left( \K_{t,p} + \sigma^2 I \right)^{-\frac{1}{2}}}_2 \leq \sigma^{-1}$ since $\K_{t,p}$ is positive semi-definite for any $t$ and $p$. Therefore,
\begin{align}
    \mu_{t,p}(x) &\leq \norm[\Big]{ \k_{t,p}(x)^T \left( \K_{t,p} + \sigma^2 I \right)^{-\frac{1}{2}} }_2 \cdot
    \norm[\Big]{\left( \K_{t,p} + \sigma^2 I \right)^{-\frac{1}{2}}}_2
    \cdot 
    \norm{\f_{1:t} + \beps_{1:t} - \bmu_{1:t,p}}_2 \\
    &\leq \frac{1}{\sigma} \norm{\f_{1:t} + \beps_{1:t} - \bmu_{1:t,p}}_2.
\end{align}

Then, we bound $\E[ \maxt \mu_{t,p}(x_t)]$. 
\begin{align}
    \E[ \maxt \mu_{t,p}(x_t)] 
    &\leq 
    \frac{1}{\sigma} \left(
     \E \left[ \maxt 
     \norm{\f_{1:t}}_2  
    +  \maxt 
     \norm{\beps_{1:t}}_2  
    + \maxt 
     \norm{\bmu_{1:t,p}}_2  
     \right] \right)
     \\
    &\leq 
    \frac{1}{\sigma}
     \E \left[ \maxt 
     \sqrt{\sum_{s=1}^t f(x_s)^2} 
     + \maxt 
        \sqrt{\sum_{s=1}^t \epsilon_s^2}
     + \maxt 
        \sqrt{\sum_{s=1}^t \mu_{1,p}^2(x_s)}
     \right]  \\
    &\leq 
    \frac{1}{\sigma} \left(
     \E \left[  
     \sqrt{T \sup_{x \in \Xc} f(x)^2} 
     \right]
    + \E \left[ 
        \sqrt{\sum_{t=1}^T \epsilon_t^2}
     \right]
     + \sqrt{ T \sup_{x \in \Xc} \mu_{1,p}^2(x)}
     \right) \\
    &\leq 
    \frac{1}{\sigma} \left(
    \sqrt{T}
     \E \left[  
     \sup_{x \in \Xc} |f(x)| 
     \right]
    + \sqrt{\E \left[ 
        \sum_{t=1}^T \epsilon_t^2
     \right]} 
     + \sqrt{T} \mumax \right) 
     \hspace{0.25em} (\text{Jensen's ineq.})\\
    &\leq 
    \frac{\sqrt{T}}{\sigma} \left(M + \sigma + \mumax \right).
     \hspace{10.5em} \left( \textstyle \sumt \epsilon_t^2 \sim \sigma^2 \chi^2_T \right)
\end{align}
Similarly, we bound $\E[\mu_{t,p}(x_t)^2]$:
\begin{align}
    \E[\mu_{t,p_t}(x_t)^2] &\leq \frac{1}{\sigma^2} \E \left[ \norm{\f_{1:t} + \beps_{1:t} - \bmu_{1:t,p_t}}_2^2 \right] \\
    &\leq \frac{1}{\sigma^2} \E \left[ \sumt (f(x_t) + \epsilon_t - \mu_{1,p_t}(x_t) )^2 \right] \\
    &\begin{multlined}[b]
    = \frac{1}{\sigma^2} \E \Bigg[ \sumt f(x_t)^2 +\epsilon_t^2 +\mu_{1,p_t}^2(x_t)
    \\
    \hspace{7em} + 2 ( f(x_t) \epsilon_t  - f(x_t)\mu_{1,p_t}(x_t) - \epsilon_t \mu_{1,p_t}(x_t)) \Bigg] 
    \end{multlined}
    \\
    &= \frac{1}{\sigma^2} \E \left[ \sumt 
        f(x_t)^2 + \epsilon_t^2 + \mu_{1,p_t}^2(x_t)- 2f(x_t) \mu_{1,p_t}(x_t)
    \right] 
    \hspace{0.0em} \left( \parbox{8.4em}{$f(x_t), \mu_{1,p_t}(x_t) \ind \epsilon_t$, \\and $\E[\epsilon_t] = 0$} \right) \\
    &\leq \frac{1}{\sigma^2} \E \left[ \sumt \sup_{x \in \Xc} f(x)^2 +\epsilon_t^2 + \mu_{1,p_t}^2(x_t) + 2 \sup_{x \in \Xc} |f(x)| |\mu_{1,p_t}(x)| \right] \\
    &\leq \frac{1}{\sigma^2} T \left( \E \left[  \left(\sup_{x \in \Xc} |f(x)|\right)^2 \right] + \sigma^2 + \mumax^2 + 2 M \mumax \right)
    \hspace{1.5em} \left( \textstyle \sumt \epsilon_t^2 \sim \sigma^2 \chi^2_T \right)\\
    &\leq \frac{1}{\sigma^2} T \left( \bar{M} + \sigma^2 + \mumax^2 + 2M \mumax \right).
    \hspace{2em} \left( \text{\cref{lem:bound_sup_f_squared}} \right)
\end{align}

\end{proof}

\begin{lemma} \label{lem:bound_sup_f_squared}
    Let $M = \E[\sup_{x \in \Xc} |f(x)|]$, $M_p = \E[\sup_{x \in \Xc} |f(x)| \big| p^* = p]$, and $M_\Delta = \max_{p \in P} M_p - \min_{p \in P} M_p$. If $k_p(x,x): \Xc \times \Xc \mapsto [-1,1]$, $\forall p \in P$, then
    \begin{align}
        \E \left[ \Big( 
            \sup_{x \in \Xc} |f(x)|
        \Big)^2 \right] \leq M^2 + 1 + \frac{M_\Delta^2}{4} =: \bar{M}.
    \end{align}
\end{lemma}
\begin{proof}
First, by the variance formula $\V(X) = \E[X^2] - \E[X]^2$,
\begin{align}
    \E[ (\sup_{x \in \Xc} |f(x)| )^2] = M^2 + \V\left( \sup_{x \in \Xc} |f(x)| \right).
\end{align}
The variance $\V[ \sup_{x \in \Xc} |f(x)|]$ can be bounded by the law of total variance as follows:
\begin{align}
    \V\left[\sup_{x \in \Xc} |f(x)|\right] &= \E_{p^*} \left[ \V \left( \sup_{x \in \Xc} |f(x)| \big| p^* \right)\right] + \V_{p^*} \Bigg( \underbrace{\E \left[ \sup_{x \in \Xc} |f(x)| \big| p^* \right]}_{M_{p^*}:=}\Bigg) \\
    &\overset{(a)}{\leq} \E_{p^*} \left[ \sup_{x \in \Xc} \sigma^2_{1,p^*}(x) \right]
    + \V_{p^*}(M_{p^*}) \\
    &\leq 1
    + \frac{(\max_p M_p - \min_p M_p)^2}{4} .
    \hspace{6em} \left( \parbox{9em}{$\sigma^2_{1,p}(x) \leq 1$,\\ and Popoviciu's ineq.} \right)
\end{align}
Note that $\V \left( \sup_{x \in \Xc} |f(x)| \,\big| \,p^* \right) \leq \sup_{x \in \Xc}\sigma_{1,p^*}^2(x)$, used in $(a)$, follows from the Gaussian Poincaré inequality applied to $\sup_{x \in \Xc} |f(x)|$, see \citet[Theorem 3.20 and Exercise 3.24]{boucheronConcentration2013}.
\end{proof}

\begin{figure}
    \centering
    \includegraphics[width=0.25\linewidth]{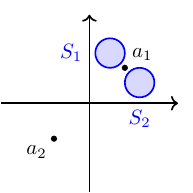}
    \caption{Potential counterexample to Eq (9) in \citet{hongThompson2022}. The blue regions represent when the event $E_0$ holds and the black dots represent the two arms $a_1$ and $a_2$.}
    \label{fig:hongcounterexample}
\end{figure}

\section{Technical issues with MixTS regret bound in the linear setting} \label{app:hongissues}
Theorem 1 in \citet{hongThompson2022} provides a regret bound for MixTS in the linear setting. The linear setting assumes that the true parameter $\theta^* | S_\ast \sim \N(\theta_{0, S_*}, \Sigma_{0, S_*})$ where the latent state $S_*$ is sampled from a discrete prior $P_1$. The proof of Theorem 1 in \citet{hongThompson2022} contains non-obvious steps that seem difficult to motivate. We use the notation of \citet{hongThompson2022}.

    First, Eq. (9) in \citet{hongThompson2022} uses the TS property that the true prior and optimal arm is equal in distribution to the selected prior and selected arm given the history: $A_{t,*}^\top \bar{\theta}_{t,S_*} | H_t \eqd A_{t}^\top \bar{\theta}_{t,S_t} | H_t$ (equivalent to $\mu_{t, p^*}(x^*) | H_t \eqd \mu_{t, p_t}(x_t) | H_t$ in our notation).
    However, Eq. (9) additionally conditions on the event $E_0 = \{ \| \theta_* - \theta_{0, S_*} \|_{\Sigma_{0,S_*}^{-1}} \leq \sqrt{2d \log (dn)} \}$ where $\theta_*$ lies close to its prior mean.
    The TS property does not hold under this event since it modifies the distribution of the linear parameter $\theta^*$ but not the sampled parameters $\theta_t$, thus changing the distribution of the optimal arm $A_{t,*}$ but not the selected arm $A_t$. Consider the example in \cref{fig:hongcounterexample}, if $E_0$ holds then $\theta_*$ lies in the blue regions and thus $a_2$ is optimal w.p. 0. If $E_0^c$ holds, then $a_2$ is optimal with a non-zero probability. However, MixTS is oblivious to $E_0$ given the history and thus $A_{t,*} | H_t, E_0 \not \eqd A_{t} | H_t$. This counterexample illustrates the overall idea but we have not validated that the scale of the arms and the blue regions are feasible.

    Second, five lines above Eq. (9) in \citet{hongThompson2022}, it is stated that the {\it regret} is upper-bounded by a constant $M$ whenever $E_0$ occurs. However, from Eq (9) to the first term in step 3 of their analysis (page 15), the bound of $M$ is applied implicitly to $A_t^\top \bar{\theta}_{t, S_t} - A_t^\top \theta_*$ without motivation. For the setting with bounded rewards, then $\bar{\theta}_{t, S_t}$ is also bounded but for Gaussian rewards $\bar{\theta}_{t, S_t}$ can be unbounded.

    Third, the second term in Eq. (9) contains the indicator function $\mathbf{1}\{E_0\}$: $\mathbb{E}[ (A_t^\top \bar{\theta}_{t, S_t} - A_t^\top \theta_*) \mathbf{1}\{ E_0 \} ]$. In step 3 (page 15), this indicator function is dropped without motivation: $\mathbb{E}[ \langle A_t^\top \bar{\theta}_{t, S_t} - A_t^\top \theta_* \rangle_M ]$ where $\langle \cdot \rangle_M = \min(\cdot, M)$ for the bound $M$. If the expression inside is non-negative w.p. 1, then this step would be valid but this is not the case. 

    Fourth, from our understanding, the final equation on page 15 adds and subtracts the confidence bound and adds a zero-mean Gaussian inside a minimum. However, adding a zero-mean Gaussian inside a minimum reduces the expectation but the analysis seems to assume that it would increase the expectation. I.e. it is seemingly assumed that $\mathbb{E}[\min(M, X)] \leq \mathbb{E}[\min(M, X + \epsilon_t)$ for a constant M and random variable $X$. However, the reverse inequality is true.

\section{Description of kernels} \label{app:kernels}
The RBF kernel, $k(x, \tilde{x}) = \exp(-|| x - \tilde x ||^2 / \ell^2)$ guarantees that $f$ is smooth. The lengthscale parameter $\ell > 0$ determines how quickly $f$ changes, smaller values lead to more fluctuations. The rational quadratic (RQ) kernel $k(x, \tilde x) = \left( 1 + \frac{|| x - \tilde x ||^2}{2\alpha \ell^2} \right)^{-\alpha}$ where $\alpha > 0$ is a mixture of RBF kernels with varying lengthscales.
The Matérn kernel \citep{maternSpatial1986} $k(x, \tilde x) = \frac{2^{1-\nu}}{\Gamma(\nu)} \left( \frac{\sqrt{2\nu}|| x - \tilde x ||}{\ell}\right)^{\nu} K_{\nu} \left( \frac{\sqrt{2\nu} ||x - \tilde x||}{\ell}\right)$ where $\nu > 0$ is the smoothness parameter that imposes that $f$ is k-times differentiable if $\nu > k$ for integer $k$. The functions $\Gamma(\nu)$ and $K_v$ correspond to the gamma function and a modified Bessel function \citep{williamsGaussianProcessesMachine2006}.
The periodic kernel $k(x, \tilde{x}) = \exp \left(-\frac{1}{2} \sum_{i=1}^d \sin^2 (\frac{\pi}{\rho}(x_i - \tilde x_i)) / \ell \right)$ generates smooth and periodic functions with period $\rho > 0$ \citep{mackayIntroductionGaussianProcesses1998}.
The linear kernel $k(x, \tilde{x}) = v x^\top \tilde x$ generates linear functions where $v$ is the variance parameter.

\section{Additional experimental details} \label{app:expdetails}
In this section, we provide some additional details about the experiments. All experiments were run on a compute cluster with a mix of GPUs (Nvidia A100, A40, T4 and V100). The GPU used was decided based on availability at the time and no implementation depends on a specific GPU. The algorithms were run in parallel in a single job for each seed. Each job in the synthetic and real-world data experiments ran for approximately 5 minutes. With the 500 seeds, this leads to a combined 250 GPU-hours. Running all algorithms for one seed in the lengthscale scaling experiment with $|P| = 128$ took approximately 40 minutes, and is in total equivalent to around 330 GPU-hours.

\subsection{Synthetic experiments}
For the kernel experiment, all kernels use a lengthscale of $1.0$ and are scaled s.t. $k(x, \tilde{x}) \leq 1$. In addition, the mean function for all priors is zero everywhere. 
For the subspace experiment, the total dimensions $d = 16$ but each prior $p_i$ assumes $f(x)$ depends on $d_s = 4$ subdimensions: $[i, i+1, i+2, i+3]$ for $i \in [5]$. Dimensions larger than $5$ are wrapped around 1, i.e. $((j-1) \mod 5) + 1$, such that the priors are equally difficult to distinguish and optimize. The prior elimination methods use $\delta = 0.05$ across all experiments, including the oracle methods. During every iteration $t$, SCoreBO samples $M$ priors from the hyperposterior $P_t$ and samples $N$ optimizers $x^*, f^*$ for each prior sampled through posterior sampling. 
In all experiments, we use $M = 16$ and $N = 12$ for SCoreBO. While our $M$ value matches that of \citet[Table 3]{hvarfner_self-correcting_2023}, we increase the $N$ value from 8 to 12. 
We use the implementation of the SCoreBO acquisition function in BoTorch (Community) \citep{balandat2020botorch}. To make the implementation fast with GPUs, we set \texttt{linear\_operator.settings.stable\_qr\_cpu\_threshold} to 8 in order to avoid QR-factorization being performed on CPU \citep{gardner_gpytorch_2018,pleiss2022linearoperator,linear_operator}. To avoid out of memory issues, we replace the default \texttt{torch.matmul} in \texttt{DefaultPredictionStrategy.\_exact\_predictive\_covar\_inv\_quad\_form\_root} (from \texttt{\hyphenchar\font=`\-gpytorch.models.exact\_prediction\_strategies}) with an equivalent \texttt{torch.einsum} \citep{gardner_gpytorch_2018}. Since the priors in our experiments are discrete, we compute the hyperposterior exactly and sample from it directly. Similarly, the expectation with respect to the hyperposterior is computed exactly for EEI. 

\subsection{Real-world data experiments}
As discussed in \cref{sec:experiments}, each dataset is split into a training and test set. The training sets are split into separate buckets to define our priors.
For each bucket $p$, we compute the empirical mean $\hat{\mu}_p$ and covariance $\hat{\Sigma}_p$ which defines the prior $\GP(\hat{\mu}_p, \hat{\Sigma}_p)$. The buckets in the Intel data corresponds to the 12 days in the training dataset. For the PeMS data, each hour between 06:00 and 13:00 defines one bucket, giving 7 priors. For the daily precipitation data, each month in the year constitutes a bucket, yielding 12 priors. When running the experiments, we select a measurement of all sensors from the test data uniformly at random. The selected measurements correspond to the unknown function $f(x)$ where $x$ is the sensor index and the goal is then to identify sensors measuring large temperatures, small speeds or high precipitation respectively for the three datasets. When the algorithms select an arm to evaluate, we add Gaussian noise with variance $\sigma^2$ around 5\% of the signal variance, similar to \citet{srinivasInformationTheoretic2012,bogunovic_time-varying_2016}.

For all the real-world datasets, sensors containing any null measurements are filtered out.

The Intel Berkeley dataset consists of measurements from 46 temperature sensors across 19 days. The training set consists of the first 12 days of measurements and the remaining 7 days constitute the test set. The noise variance is set to $\sigma^2 = 0.7^2$.

The PeMS data is considered in the public domain \citep{pems_tos} and consists of measurements from 211 sensors along the I-880 highway from all of 2023. The goal is to find the sensors with low speeds to identify congestions. We negate the speed values to obtain a maximization problem. We use the 5-min averages provided by PeMS. Data between 2023-01-01 and 2023-09-01 is put into the training set whilst the data until 2023-12-31 is put into the test set. The noise variance is set to $\sigma^2 = 2.25^2$.

The PNW precipitation data consists of daily precipitation data from 1949 to 1994 across 167 $50 \times 50$ km regions in the Pacific Northwest. The goal is to find the region with the highest precipitation for any given day. The training data consists of the measurements made prior to 1980 and the test data consists of the measurements between 1980 and 1994. The original data is stated to be given in mm/day however the data seems to be off by a factor of 10. We rescale the data to a log-scale using $\log (\cdot / 10 + 0.1)$, similar to \citet{krause_near-optimal_2008}. The noise variance is set to $\sigma^2 = 0.41^2$.

In the Intel experiment, we removed one outlier seed. All methods had a final cumulative regret around 6000°C on this instance, note that the average for the worst performing model across the other seeds was $\approx 250$°C. The outlier is shown in \cref{fig:outlier}. We can see that one of the sensors display very high temperatures compared to all other sensors, which is why all methods performed poorly on this seed. It should be noted that many of the sensors in the Intel data logged degrees above 100°C after a certain time - likely due to sensor failure rather than boiling temperatures in an office environment. Also note that these days were excluded from both our training and test data. The outlier could be an indication that this particular sensor was starting to fail earlier than others.

\begin{figure}[htb]
    \centering
    \includegraphics[width=0.5\linewidth]{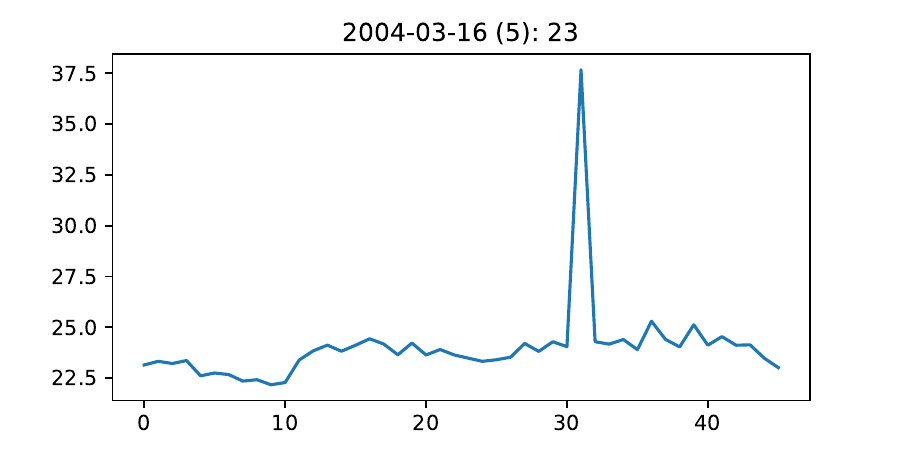}
    \caption{Removed sample from the test data in the Intel experiment. One of the sensors displays very high temperatures. }
    \label{fig:outlier}
\end{figure}

\section{Additional experimental results} \label{app:expresults}
In this section, we provide some additional experimental results.

\begin{table}
    \centering
    \caption{Average total regret and $\pm$ 1 standard error for the synthetic and real-world data experiments. The algorithms with the lowest regret (excluding oracle algorithms) are highlighted in bold.}
    \label{tab:jointregret}
    \resizebox{\linewidth}{!}{
    \begin{tabular}{c ccc ccc}
    \toprule
    \multirow{2}{*}{Algorithm} & \multicolumn{3}{c}{Synthetic} & \multicolumn{3}{c}{Real-world data} \\
     \cmidrule(r){2-4} \cmidrule(l){5-7}
     & Kernel & Lengthscale & Subspace & Intel & PeMS & PNW Precip. \\
    \midrule
MAP GP-TS & 84.3 $\pm$ 8.4 & \bf{30.2 $\pm$ 1.2} & \bf{87.2 $\pm$ 1.0} & 73.8 $\pm$ 7.7 & 1635.0 $\pm$ 129.3 & \bf{178.4 $\pm$ 6.9} \\ 
HP-GP-TS & \bf{39.2 $\pm$ 1.4} & \bf{31.4 $\pm$ 1.0} & \bf{88.3 $\pm$ 0.9} & \bf{54.1 $\pm$ 3.0} & \bf{1327.8 $\pm$ 107.9} & \bf{167.7 $\pm$ 5.2} \\ 
PE-GP-TS & 62.0 $\pm$ 0.6 & 61.8 $\pm$ 0.5 & 177.1 $\pm$ 1.4 & 106.5 $\pm$ 2.1 & \bf{1214.2 $\pm$ 81.5} & 200.9 $\pm$ 4.0 \\ 
PE-GP-UCB & 121.6 $\pm$ 1.2 & 114.2 $\pm$ 0.6 & 389.0 $\pm$ 1.5 & 173.0 $\pm$ 2.7 & 2159.2 $\pm$ 48.4 & 506.2 $\pm$ 2.6 \\ 
Oracle GP-TS & 35.0 $\pm$ 1.1 & 28.1 $\pm$ 0.8 & 86.0 $\pm$ 1.0 \\ 
Oracle GP-UCB & 68.5 $\pm$ 1.9 & 48.3 $\pm$ 1.2 & 217.3 $\pm$ 1.0 \\ 
SCoreBO & 180.4 $\pm$ 7.7 & 180.8 $\pm$ 5.8 & 106.6 $\pm$ 0.9 & 256.8 $\pm$ 9.3 & 3460.2 $\pm$ 163.7 & 861.3 $\pm$ 21.0 \\ 
EEI & \bf{39.0 $\pm$ 2.6} & \bf{30.1 $\pm$ 2.1} & \bf{88.3 $\pm$ 4.2} & \bf{51.6 $\pm$ 4.8} & 1664.1 $\pm$ 137.7 & 196.5 $\pm$ 12.3 \\ 
\bottomrule
    \end{tabular}
    }
\end{table}
First, we provide the average total regret for the synthetic and real-world data experiments in \cref{tab:jointregret}. We observe that HP-GP-TS either has the lowest regret or is within 1 standard error of the algorithm with the lowest regret across all the experiments. In \cref{fig:regretsynthetic}, it can be noted that EEI had low regret early in the synthetic experiment but HP-GP-TS either catches up or almost catches up later in the experiments. We compare both algorithms with an extended time horizon of $T = 1500$, the results are shown in \cref{fig:longregret,tab:longregret}. With the extended time horizon, HP-GP-TS achieves the lowest regret across all synthetic experiments. Although, EEI is still within 1 standard error on the lengthscale experiment.

\begin{figure}
    \centering
    \includegraphics[width=\linewidth]{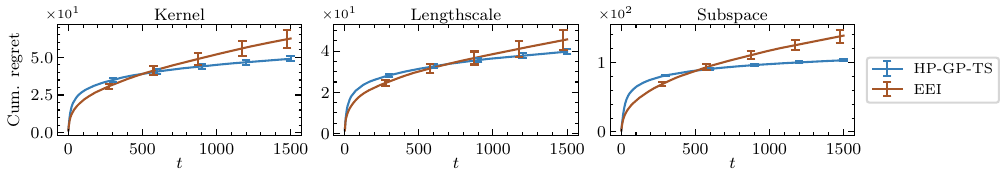}
    \caption{Cumulative regret for synthetic experiments extended time horizon $T = 1500$ with varying kernel (left), lengthscale (center) and mean function (right). Errorbars correspond to $\pm1$ standard error.}
    \label{fig:longregret}
\end{figure}

\begin{table}
    \centering
    \caption{Average total regret and $\pm$ 1 standard error for the synthetic experiments with longer horizon $T = 1500$. The algorithms with the lowest regret (excluding oracle algorithms) are highlighted in bold.}
    \label{tab:longregret}
    \begin{tabular}{c ccc}
        \toprule
        Algorithm & Kernel & Lengthscale & Subspace \\
        \midrule
HP-GP-TS & \bf{49.1 $\pm$ 1.6} & \bf{39.7 $\pm$ 1.2} & \bf{103.4 $\pm$ 1.3} \\ 
EEI & 62.6 $\pm$ 6.2 & \bf{45.7 $\pm$ 5.0} & 138.9 $\pm$ 9.2 \\ 

\bottomrule
    \end{tabular}
\end{table}

\begin{figure}
    \centering
    \includegraphics[]{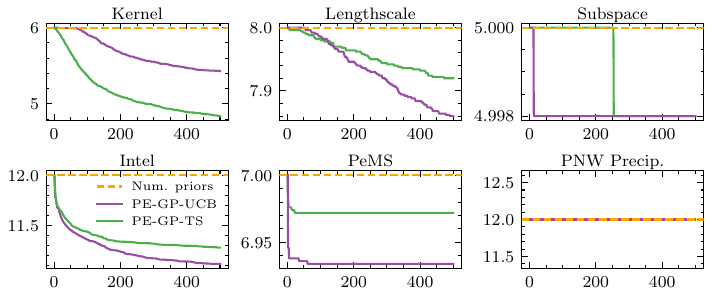}
    \caption{Mean number of priors remaining in $P_t$ over time for PE-GP-UCB and -TS.}
    \label{fig:elimination}
\end{figure}
\begin{figure}
    \centering    
    \includegraphics[width=\linewidth]{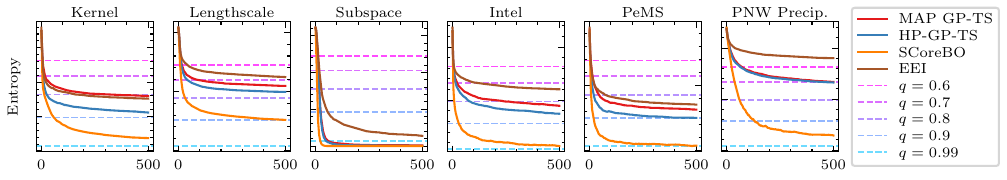}
    \caption{Average entropy in the hyperposterior $P_t$ over time for HP- and MAP GP-TS. The dashed reference values correspond to entropies of discrete distributions with prob. $q$ on one choice and prob. $\frac{1-q}{|P|-1}$ on the other $|P|-1$ choices.}
    \label{fig:entropy}
\end{figure}
Next, we include the mean number of priors in $P_t$ for all experiments in \cref{fig:elimination}. Similarly, we include the average entropy of the hyperposterior for all experiments in \cref{fig:entropy}. For the lengthscale, subspace, PeMS and PNW precipitation experiments, hardly any priors are eliminated. In contrast, the hyperposterior entropy concentrates rapidly across all experiments with the subspace and PNW precipitation having the most and least concentrated hyperposteriors.

We include the full set of confusion matrices for the lengthscale and subspace experiments in \cref{fig:confusionmatrices}. In the lengthscale experiments, we observe that PE-GP-UCB and -TS oversample the shortest lengthscale. This is similar to the kernel experiment where the Matérn 3/2 kernel was also oversampled. However, we see that HP-GP-TS and MAP GP-TS do not suffer from this optimistic bias. In the subspace experiment, HP- and MAP GP-TS have an accuracy of around 96\% whereas PE-GP-TS and -UCB have accuracies 30\% and 36\% respectively. Even though PE-GP-UCB has a higher accuracy than PE-GP-TS, it still has significantly higher regret. Additionally, the priors are equivalent up to coordinate permutations and therefore generate functions that are equally difficult to optimize. Unlike the kernel and lengthscale experiments, the PE-methods do not oversample any specific prior but commit too much time to exploring along the irrelevant dimensions.

In \cref{tab:lengthscales,tab:subspaces}, the total regret for the lengthscale and subspace scaling experiments are shown.

\begin{figure}
    \centering
    \begin{subfigure}{\linewidth}
    \centering
    \includegraphics[width = \linewidth]{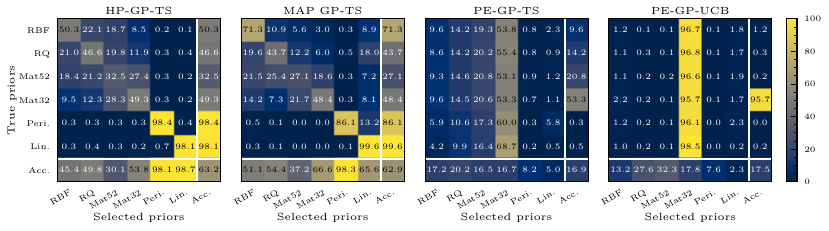}%
    \caption{Kernel experiment}
    \end{subfigure}
    \begin{subfigure}{\linewidth}
    \centering
    \includegraphics[width = \linewidth]{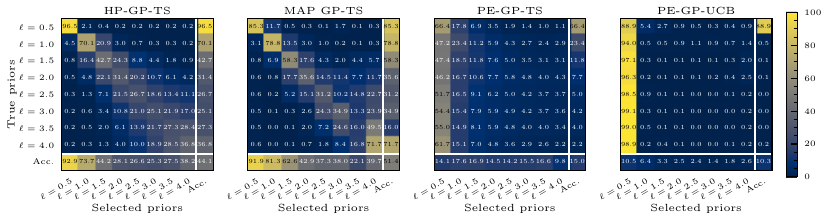}%
    \caption{Lengthscale experiment}
    \end{subfigure}
    
    \begin{subfigure}{\linewidth}
    \centering
    \includegraphics[width = \linewidth]{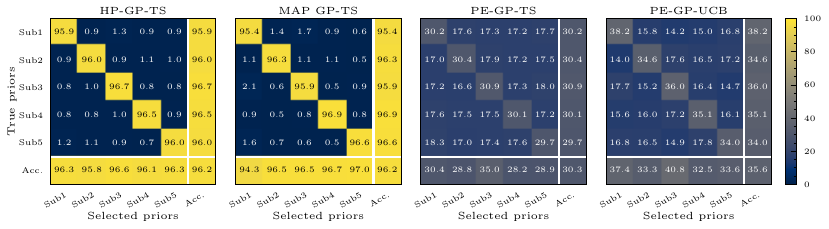}%
    \caption{Subspace experiment}
    \end{subfigure}
    \caption{Confusion matrices for the true prior $p^*$ and $p_t$ across all time steps of the synthetic experiments.}
    \label{fig:confusionmatrices}
\end{figure}

\begin{table}
    \centering
    \caption{Average total regret and $\pm1$ standard error for the lengthscale experiment as $|P|$ increases. The algorithms with the lowest regret (excluding oracle algorithms) are highlighted in bold.}
    \label{tab:lengthscales}
    \begin{tabular}{c ccccc}
        \toprule
        \multirow{2}{*}{Algorithm}& \multicolumn{5}{c}{Lengthscales, $|P|$} \\ 
        \cmidrule{2-6} & 8 & 16 & 32 & 64 & 128 \\ \midrule
MAP GP-TS & \bf{30.2 $\pm$ 1.2} & \bf{32.4 $\pm$ 2.5} & \bf{32.5 $\pm$ 2.1} & \bf{28.7 $\pm$ 1.1} & \bf{30.8 $\pm$ 1.9} \\ 
HP-GP-TS & \bf{31.4 $\pm$ 1.0} & \bf{31.7 $\pm$ 0.9} & \bf{30.8 $\pm$ 0.8} & \bf{30.7 $\pm$ 1.0} & \bf{31.0 $\pm$ 1.4} \\ 
PE-GP-TS & 61.8 $\pm$ 0.5 & 61.3 $\pm$ 0.5 & 62.2 $\pm$ 0.5 & 62.4 $\pm$ 0.4 & 64.3 $\pm$ 0.4 \\ 
PE-GP-UCB & 114.2 $\pm$ 0.6 & 114.8 $\pm$ 0.6 & 115.5 $\pm$ 0.6 & 114.5 $\pm$ 0.6 & 114.8 $\pm$ 0.6 \\ 
Oracle GP-TS & 28.1 $\pm$ 0.8 & 26.4 $\pm$ 0.8 & 27.3 $\pm$ 0.8 & 26.5 $\pm$ 0.7 & 25.7 $\pm$ 0.7 \\ 
Oracle GP-UCB & 48.3 $\pm$ 1.2 & 46.9 $\pm$ 1.1 & 48.4 $\pm$ 1.1 & 46.5 $\pm$ 1.0 & 45.6 $\pm$ 1.0 \\ 
SCoreBO & 180.8 $\pm$ 5.8 & 240.3 $\pm$ 6.3 & 277.2 $\pm$ 6.8 & 281.6 $\pm$ 6.7 & 283.9 $\pm$ 6.8 \\ 
EEI & \bf{30.1 $\pm$ 2.1} & \bf{30.9 $\pm$ 2.2} & \bf{29.4 $\pm$ 2.2} & \bf{30.8 $\pm$ 2.6} & \bf{32.1 $\pm$ 2.8} \\ 

         \bottomrule
    \end{tabular}
 \end{table}
\begin{table}
    \centering
    \caption{Average total regret and $\pm1$ standard error for the subspace experiment as $|P|$ increases. The algorithms with the lowest regret (excluding oracle algorithms) are highlighted in bold.}
    \label{tab:subspaces}
    
    \begin{tabular}{c cccc}
        \toprule
        \multirow{2}{*}{Algorithm}& \multicolumn{4}{c}{Subspaces, $|P|$} \\ 
        \cmidrule{2-5} & 5 & 8 & 12 & 16 \\ \midrule
MAP GP-TS & \bf{87.2 $\pm$ 1.0} & 89.9 $\pm$ 1.1 & 89.1 $\pm$ 0.9 & 90.9 $\pm$ 1.2 \\ 
HP-GP-TS & \bf{88.3 $\pm$ 0.9} & 88.8 $\pm$ 0.9 & 89.5 $\pm$ 0.9 & 90.8 $\pm$ 0.9 \\ 
PE-GP-TS & 177.1 $\pm$ 1.4 & 269.5 $\pm$ 1.9 & 344.7 $\pm$ 2.3 & 396.9 $\pm$ 2.5 \\ 
PE-GP-UCB & 389.0 $\pm$ 1.5 & 526.0 $\pm$ 1.8 & 622.4 $\pm$ 2.3 & 688.0 $\pm$ 2.7 \\ 
Oracle GP-TS & 86.0 $\pm$ 1.0 & 84.1 $\pm$ 0.9 & 84.6 $\pm$ 1.0 & 84.8 $\pm$ 1.0 \\ 
Oracle GP-UCB & 217.3 $\pm$ 1.0 & 218.2 $\pm$ 1.0 & 218.6 $\pm$ 1.0 & 218.9 $\pm$ 0.9 \\ 
SCoreBO & 106.6 $\pm$ 0.9 & 108.2 $\pm$ 0.8 & 108.9 $\pm$ 0.7 & 109.5 $\pm$ 0.7 \\ 
EEI & \bf{88.3 $\pm$ 4.2} & \bf{81.3 $\pm$ 3.8} & \bf{82.5 $\pm$ 3.6} & \bf{81.3 $\pm$ 3.8} \\ 

         \bottomrule
    \end{tabular}
\end{table}

\end{document}